\DeclareMathOperator*{\diag}{diag}
\newcommand{\bfe}{\mathbf{e}}
\newcommand{\bff}{\mathbf{f}}
\newcommand{\bfg}{\mathbf{g}}
\newcommand{\bfo}{\mathbf{o}}
\newcommand{\bfq}{\mathbf{q}}
\newcommand{\bfx}{\mathbf{x}}
\newcommand{\bfy}{\mathbf{y}}
\newcommand{\bfzeta}{\boldsymbol{\zeta}}
\newcommand{\bfxi}{\boldsymbol{\xi}}
\newcommand{\bbR}{\mathbb{R}}
\newcommand{\calL}{\mathcal{L}}
\newcommand{\calO}{\mathcal{O}}
\newcommand{\calP}{\mathcal{P}}
\newcommand{\calT}{\mathcal{T}}
\newcommand{\calU}{\mathcal{U}}
\newcommand{\calV}{\mathcal{V}}
\newtheorem{proposition}{Proposition}
\def\htwomapping{$\text{H}_2$-Mapping\xspace}
\def\methodname{OREN\xspace}
\def\papertitle{OREN: Octree Residual Network for Real-Time Euclidean Signed Distance Mapping}
\title{\Large \bf\papertitle}
\author{\authorblockN{Zhirui Dai$^{1}$$^{*}$ \qquad Qihao Qian$^{1}$$^{*}$ \qquad Tianxing Fan$^{1}$ \qquad Nikolay Atanasov$^{1}$}
\thanks{$^{*}$Equal contribution}%
\thanks{$^{1}$The authors are with the Department of Electrical and Computer Engineering, University of California San Diego, La Jolla, CA 92093, USA, e-mails: {\tt\small \{zhdai,q2qian,t2fan,natanasov\}@ucsd.edu}.}%
\thanks{We gratefully acknowledge support from ARL DCIST CRA W911NF-17-2-0181 and the Ministry of Trade, Industry and Energy (MOTIE), Korea, under the Strategic Technology Development Program, supervised by Korea Institute for Advancement of Technology (KIAT) [Grant No. P0026052].}%
\thanks{Code available at \url{https://github.com/ExistentialRobotics/oren}}%
}
\begin{document}
\bstctlcite{IEEEexample:BSTcontrol}

\maketitle
\thispagestyle{empty}
\pagestyle{empty}

\begin{abstract}
Reconstructing signed distance functions (SDFs) from point cloud data benefits many robot autonomy capabilities, including localization, mapping, motion planning, and control. Methods that support online and large-scale SDF reconstruction often rely on discrete volumetric data structures, which affects the continuity and differentiability of the SDF estimates. Neural network methods have demonstrated high-fidelity differentiable SDF reconstruction but they tend to be less efficient, experience catastrophic forgetting and memory limitations in large environments, and are often restricted to truncated SDF. This work proposes \methodname, a hybrid method that combines an explicit prior from octree interpolation with an implicit residual from neural network regression. Our method achieves non-truncated (Euclidean) SDF reconstruction with computational and memory efficiency comparable to volumetric methods and differentiability and accuracy comparable to neural network methods. Extensive experiments demonstrate that \methodname outperforms the state of the art in terms of accuracy and efficiency, providing a scalable solution for downstream tasks in robotics and computer vision.
\end{abstract}


\section{Introduction}
\label{sec:introduction}

Accurate and differentiable 3D geometric reconstruction is critical in many robot autonomy and computer vision settings, including localization and mapping \cite{ortiz_isdf_2022,pan_pin-slam_2024}, rendering and AR/VR \cite{Chou_2023_ICCV,wang_neus2_2023}, autonomous robot navigation \cite{oleynikova_voxblox_2017,long_sensor-based_2025} and robot manipulation \cite{ReDSDF,li2024config}. In robotics, fast updates of the environment model from sensor observations and access to gradient information are important for safe navigation and precise environment interaction, while low memory footprint is important for scalability and onboard operation.

In this work, we focus on signed distance function (SDF) reconstruction. Given a query point, an SDF returns the signed distance to the nearest surface in the environment with sign indicating whether the query is in free (positive) or occupied (negative) space. SDFs have received increasing attention due to their constant-time complexity for distance and collision queries and their ability to capture complex obstacle surfaces implicitly as a zero-level set.

SDF reconstruction methods fall into roughly three categories: volumetric methods (e.g., \cite{newcombe_kinectfusion_2011,oleynikova_voxblox_2017}), Gaussian Process (GP) methods (e.g., \cite{lee2019gpis,wu_vdb-gpdf_2025}), and neural network methods (e.g., \cite{park_deepsdf_2019,wang_neus_2021}). We review representative works in Sec.~\ref{sec:related_work}. Volumetric methods utilize advanced data structures, like octrees and hashmaps, and are known for their real-time performance and scalability to large scenes. However, they provide non-differentiable SDF estimates and require growing storage for higher accuracy. GP methods learn continuous SDF models with uncertainty quantification but often suffer from high computational complexity and poor scalability. Recently, neural network methods have shown great potential to learn compact and accurate SDF representations but are often restricted to truncated SDF and struggle with catastrophic forgetting in large scenes or in online settings.

\tikzfading[name=fade out, inner color=transparent!0, outer color=transparent!100]

\begin{figure}[t]
    \centering
    \includegraphics[width=\linewidth,trim={0 50pt 0 50pt},clip]
    {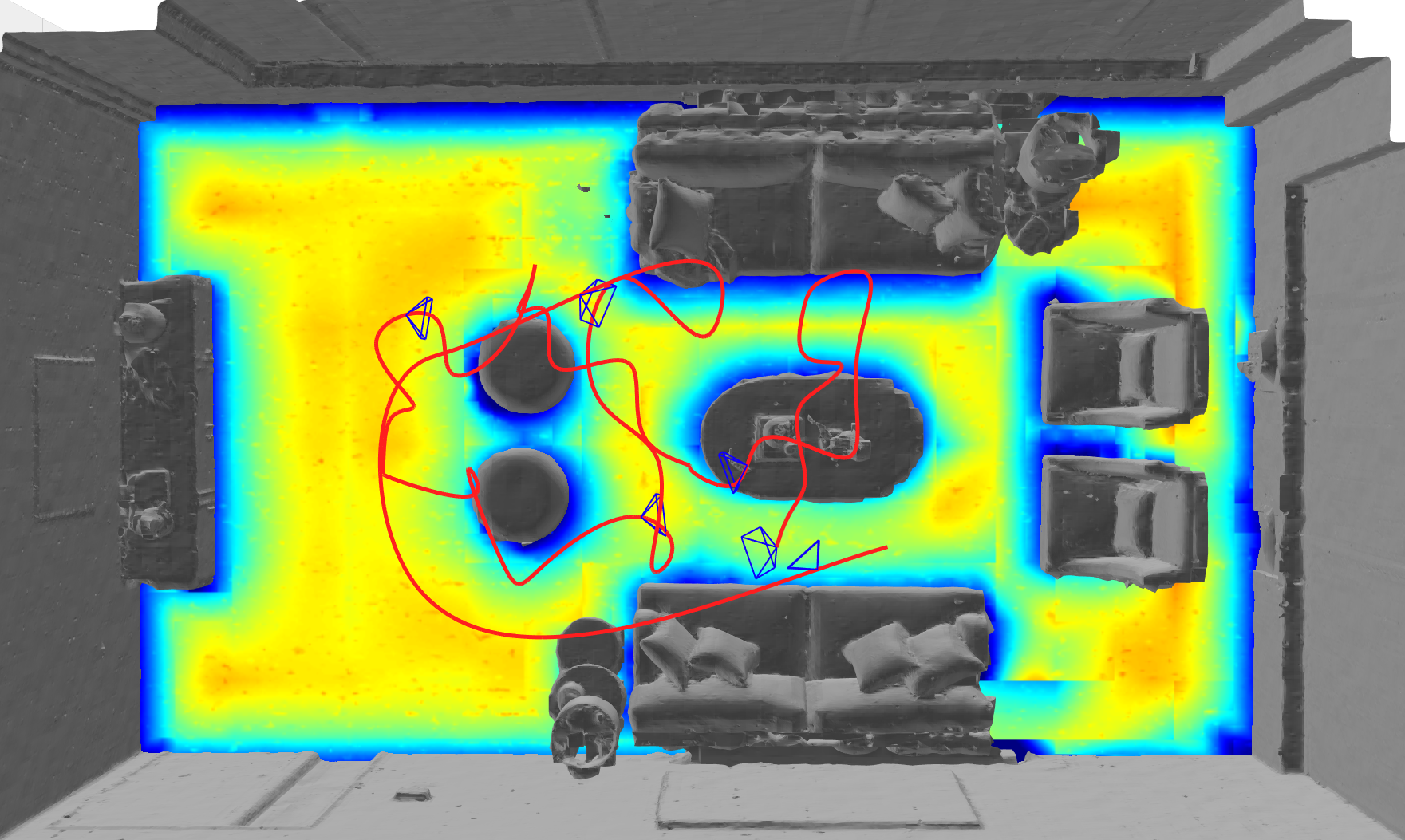}
    \caption{\small \methodname reconstructs an accurate Euclidean signed distance function online from streaming point cloud data.}
    \vspace{-1em}
    \label{fig:teaser}
\end{figure}

We propose \methodname, a hybrid method that combines the strengths of volumetric methods, in the form of an explicit SDF prior obtained from octree interpolation, and neural network methods, in the form of implicit features decoded into a residual correction of the prior. To construct the prior, we use a semi-sparse octree with SDF and gradient estimates stored at the octant vertices and gradient-augmented interpolation to obtain SDF priors at arbitrary query positions. We augment the prior prediction with a neural network residual, which recovers fine geometric details of the observed surface from implicit features. We train our hybrid explicit-implicit model using three loss functions that supervise both near-surface and distant SDF values and accelerate the convergence to achieve real-time accurate Euclidean SDF reconstruction.

The closest works to ours are \htwomapping \cite{jiang_h2-mapping_2023} and Gradient-SDF \cite{sommersang2022}.
Like ours, \htwomapping applies trilinear interpolation in a sparse octree to obtain an SDF prior and trains a neural network residual. In contrast, \htwomapping reconstructs only truncated (near-surface) SDF.
Gradient-SDF is voxel-based and also uses SDF gradients but only for improving surface reconstruction and photometric bundle adjustment, ignoring free-space SDF estimation.
In contrast, our method optimizes the octree parameters to obtain an accurate Euclidean SDF prior, corrects the prior via neural residual predictions and achieves real-time operation.

In summary, our work makes the following contributions.
\begin{itemize}
  \item We introduce a new gradient-augmented interpolation in a semi-sparse octree to obtain an SDF prior, improving the accuracy, memory, and training speed for subsequent SDF residual learning.

  \item We formulate a hybrid model that combines the explicit priors with an implicit neural residual, enabling accurate SDF learning both near to and far from the observed surfaces. We also design loss functions to encourage globally accurate SDF learning and accelerate the training process to achieve real-time performance.
\end{itemize}

\begin{figure*}[t]
  \centering
  \includegraphics[width=0.95\linewidth, trim={0 10pt 0 0}, clip]{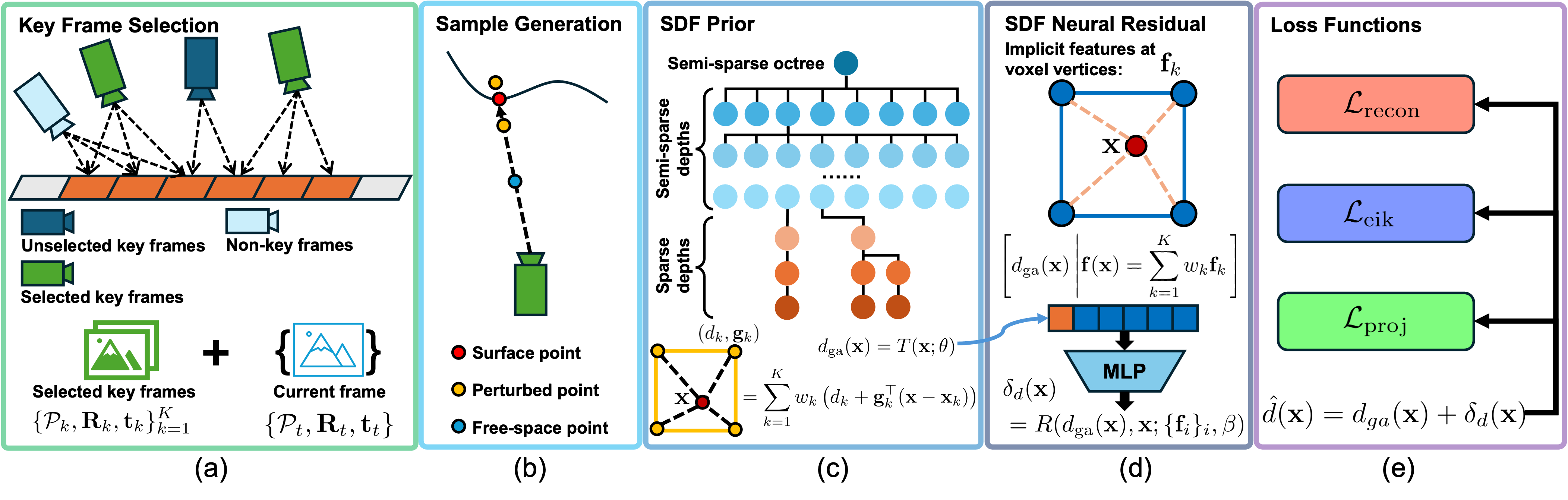}
  \caption{\small Method overview: a) We keep key frames with small overlap that maximize surface coverage; b) From the selected key frames and the current frame, we generate three sample types: \textcolor{red}{surface} points, \textcolor{orange}{perturbed} points around the surface, and \textcolor{cyan}{free-space} points; c) To predict SDF, we first obtain a prior $d_{ga}(\bfx)$ via gradient-augmented interpolation in a semi-sparse octree, where each vertex stores an SDF value and gradient; d) We also interpolate an implicit neural feature for $\bfx$ from features stored at the octant vertices, and feed it with the prior $d_\text{ga}(\bfx)$ into an MLP decoder to obtain an SDF residual $\delta_d(\bfx)$; e) The prior and residual are combined into the final SDF prediction $\hat{d}(\bfx) = d_{ga}(\bfx) + \delta_d(\bfx)$, and the model is trained with three loss functions: \textcolor{red}{reconstruction} loss, \textcolor{violet}{Eikonal} loss and \textcolor{green}{projection} loss.}
  \label{fig:method_overview}
  \vspace{-1em}
\end{figure*}

\section{Related Work}
\label{sec:related_work}

Various methods have been proposed to learn SDF from point cloud data. They can be categorized into three groups: volumetric methods \cite{curless_volumetric_1996,newcombe_kinectfusion_2011,kahler_very_2015,oleynikova_voxblox_2017,han_fiesta_2019,pan_voxfield_2022,millane_nvblox_2024}, GP-based methods \cite{lee2019gpis,lan2021loggpis,wu_vdb-gpdf_2025}, and neural network based methods \cite{park_deepsdf_2019,ortiz_isdf_2022}. We review these categories and then discuss the recent trend of using hybrid models for SDF reconstruction.

\subsection{Volumetric SDF Reconstruction}

Volumetric methods \cite{oleynikova_voxblox_2017,pan_voxfield_2022} achieve real-time truncated SDF (TSDF) reconstruction using voxel hashing for efficient updates and queries. Voxblox \cite{oleynikova_voxblox_2017} builds a TSDF layer by projective distance (from voxel center to observed surface point) and updates values via breadth-first search (BFS). Both the projective distance and the BFS path length introduce inaccuracies in SDF estimates.
Subsequent works \cite{han_fiesta_2019,pan_voxfield_2022} reduce these errors by using non-projective distance and replacing BFS path length with the distance to the nearest oriented surface point.
Gradient-SDF \cite{sommersang2022} shows that estimating TSDF together with its gradient improves accuracy and can be used for SDF-based camera tracking and bundle adjustment.
However, these methods rely on discrete SDF representations, which are non-differentiable and have accuracy limited to the grid resolution.
In contrast, our method enables real-time learning of continuous and differentiable SDF. We first estimate SDF with explicit discrete priors stored in an octree, and then use implicit neural features to predict residuals and correct the prior, which forms a compact differentiable representation of continuous SDF.

\subsection{Gaussian Process SDF Reconstruction}

GP methods learn continuous SDF representations, and support gradient computation and uncertainty quantification \cite{lee2019gpis,lan2021loggpis,wu_vdb-gpdf_2025}. GPIS \cite{lee2019gpis} uses GP to estimate oriented surface points and regress SDF online, learning the surface implicitly as the zero-level set of SDF. However, GPIS cannot extrapolate SDF predictions away from the surface. Log-GPIS \cite{lan2021loggpis} and VDB-GPDF \cite{wu_vdb-gpdf_2025} learn unsigned distance function in log space, which generalizes well globally but does not provide sign information. GP methods scale poorly to large environments due to the cubic complexity of the matrix inverse during training.
In comparison, our method uses octree interpolation, which has $O(1)$ complexity, and matrix multiplication, which has roughly quadratic complexity $O(n^2)$ for $n$ hidden dimension, to compute the SDF prior and the SDF residual respectively. Our octree structure also has a smaller memory footprint than GP's gram matrix.

\subsection{Neural Network SDF Reconstruction}

DeepSDF \cite{park_deepsdf_2019} was among the first to demonstrate that neural networks can learn compact and continuous implicit SDF representations. This inspired many subsequent neural SDF methods.
iSDF \cite{ortiz_isdf_2022} incrementally learns SDF by iteratively updating an MLP with training samples from a key frame set. iSDF also uses Eikonal regularization to enforce the Eikonal property of SDF.
NeuS \cite{wang_neus_2021} jointly learns SDF and a neural radiance field (NeRF) \cite{nerf2020}, letting the two improve each other via SDF-based volume density estimates.
Other works, like IGR \cite{gropp_implicit_2020} and NGLoD \cite{takikawa_lod_2021}, propose various network designs, loss functions, and training procedures to improve SDF reconstruction accuracy.
These works show that neural networks can learn near-surface SDF accurately enough for high-fidelity surface reconstruction. However, accuracy far from the surface is rarely evaluated.
Neural network methods \cite{park_deepsdf_2019,gropp_implicit_2020,takikawa_lod_2021,wang_hotspot_2024} that learn Euclidean SDF are often object-level and require extensive training data and time for satisfactory accuracy.

\subsection{Hybrid Methods for SDF Reconstruction}

Recently, hybrid models combining explicit geometric structures with implicit neural features show promising results. PIN-SLAM \cite{pan_pin-slam_2024} stores neural features in near-surface voxels. Given a query, its SDF prediction is a weighted sum of $k$ predictions, obtained by feeding the $k$ nearest neural features and local positions into a global decoder.
\htwomapping \cite{jiang_h2-mapping_2023} combines an octree-based SDF prior with a neural residual.
LCP-Fusion \cite{lcp-fusion-2024} extends \htwomapping to SLAM, combining the SDF prior and implicit neural feature for localization.
However, these methods still learn truncated SDF.
HIO-SDF \cite{hio-sdf_2024} removes truncation by running Voxfield \cite{pan_voxfield_2022} first to generate global SDF priors, which are combined with local SDF approximations to train a neural network. However, the accuracy and speed are limited by the volumetric method. As the observed area grows, the fixed-size network tends to learn an over-smooth SDF.

In contrast, our method, \methodname, builds a semi-sparse octree to store the prior of SDF values and gradients, which is extendable as the scene grows and efficiently represents the SDF of the whole space. With gradient-augmented interpolation in the octree, our method can produce more accurate SDF priors, leaving more capacity for the subsequent network to recover surface geometric details.

\section{Problem Statement}
\label{sec:problem_statement}

Consider a 3D environment with a set of obstacles $\calO \subset \bbR^3$. The SDF $d: \bbR^3 \rightarrow \bbR$ of $\calO$ is defined as the shortest distance from any point $\bfx \in \bbR^3$ to the obstacle surface $\partial \calO$, with a sign indicating whether $\bfx$ is inside or outside of $\calO$:
\begin{equation} \label{eq:sdf_definition}
  d(\bfx) =
  \begin{cases}
    \phantom{+}\min_{\bfy \in \partial \calO} \left\|\bfx - \bfy\right\|_2, & \bfx \not\in \calO, \\
    -\min_{\bfy \in \partial \calO} \left\|\bfx - \bfy\right\|_2, & \bfx \in \calO.
  \end{cases}
\end{equation}
The SDF satisfies two key properties: 1) the obstacle surface is encoded as the zero-level set, $d(\bfx) = 0$, $\forall \bfx \in \partial\calO$, and 2) the gradient of $d(\bfx)$ is the unit vector pointing away from the nearest surface point and satisfies an Eikonal equation~\cite{ortiz_isdf_2022}:
\begin{equation}
\nabla d(\bfx)=\frac{\bfx-\bfx_*}{d(\bfx)}, \quad \left\|\nabla d(\bfx)\right\|_2=1, \; \text{a.e.},\label{eq:sdf_constraints}
\end{equation}
where $\bfx_* \in \arg \min_{\bfy\in\partial\calO} \left\|\bfx-\bfy\right\|_2$.

Given a stream of point clouds from a range sensor (e.g., LiDAR or depth camera),
$\{\bfo_t, \calP_t\}_{t=1}$,
where $\bfo_t$ is the sensor position at time $t$ and $\calP_t$ is the set of observed surface points in the global frame, our objective is to approximate the SDF $d(\bfx)$ of $\calO$ as a scalar field, $\hat{d}: \bbR^3 \rightarrow \bbR$. We also want $\hat{d}$ to capture the SDF gradient accurately.
\section{Octree Residual Network for Learning SDF}
\label{sec:technical_approach}

Our method employs a hybrid model to reconstruct SDF. We use a semi-sparse octree, where certain octants contain no surface, to store explicit SDF and gradient estimates and compute a coarse SDF prior, described in Sec. \ref{sec:sdf_prior}. To recover geometric details, we also store implicit neural features at the octant vertices and use an MLP decoder to correct the prior SDF estimates.
The implicit neural residual prediction is described in Sec. \ref{sec:sdf_residual}. To train our model efficiently, we maintain key frames that cover the observed surface with small overlap between adjacent frames, as described in Sec. \ref{sec:key_frame_selection}. Then, from the key frames and the latest frame, we generate a dataset of different sample types, discussed in Sec.~\ref{sec:dataset_generation}, and use it to train the model with loss functions proposed in Sec. \ref{sec:loss_functions}. Fig~\ref{fig:method_overview} shows an overview.

\subsection{SDF Prior via Gradient-Augmented Octree Interpolation}
\label{sec:sdf_prior}

\subsubsection{Semi-Sparse Octree}

The SDF prior at position $\bfx$ is obtained by interpolation in a semi-sparse octree with $N$ layers. We call an octree layer \emph{dense} when all octants form a regular grid; \emph{semi-sparse} when child octants containing surface points and all their siblings are created; and \emph{sparse} when only child octants containing surface points are created.

We use a semi-sparse octree of resolution $r$, where the first $M$ layers are semi-sparse and the remaining $N-M$ layers are sparse. This is illustrated in Fig.~\hyperref[fig:method_overview]{2c}. In each vertex $\bfx_k$ of an octant with $k \in \{1,\ldots,8\}$, we store estimates $d_k \in \bbR$ and  $\bfg_k\in\bbR^3$ of the SDF $d(\bfx_k)$ and its gradient $\nabla d(\bfx_k)$, respectively, which are learnable during training. To maintain memory efficiency, a vertex is shared across neighboring octants from different tree depths. For example, the eight vertices of an octant are also included in the vertices of its eight child octants. This semi-sparse structure is essential to obtain a good SDF prior, especially for query positions away from the surface. The on-demand initialization of all child octants in the first $M$ layers costs extra memory but improves the accuracy significantly.
The choice of $M$ trades memory for accuracy: more semi-sparse layers yield a more accurate prior but use more memory. Since the octant count grows roughly eight-fold per layer, making the finest layers semi-sparse adds substantial memory but little accuracy, as the prior is already accurate near the surface. We therefore keep only the first $M$ layers semi-sparse and the rest sparse.

\begin{figure}[t]
    \centering
    \begin{subfigure}[t]{0.24\linewidth}
      \includegraphics[width=\linewidth]{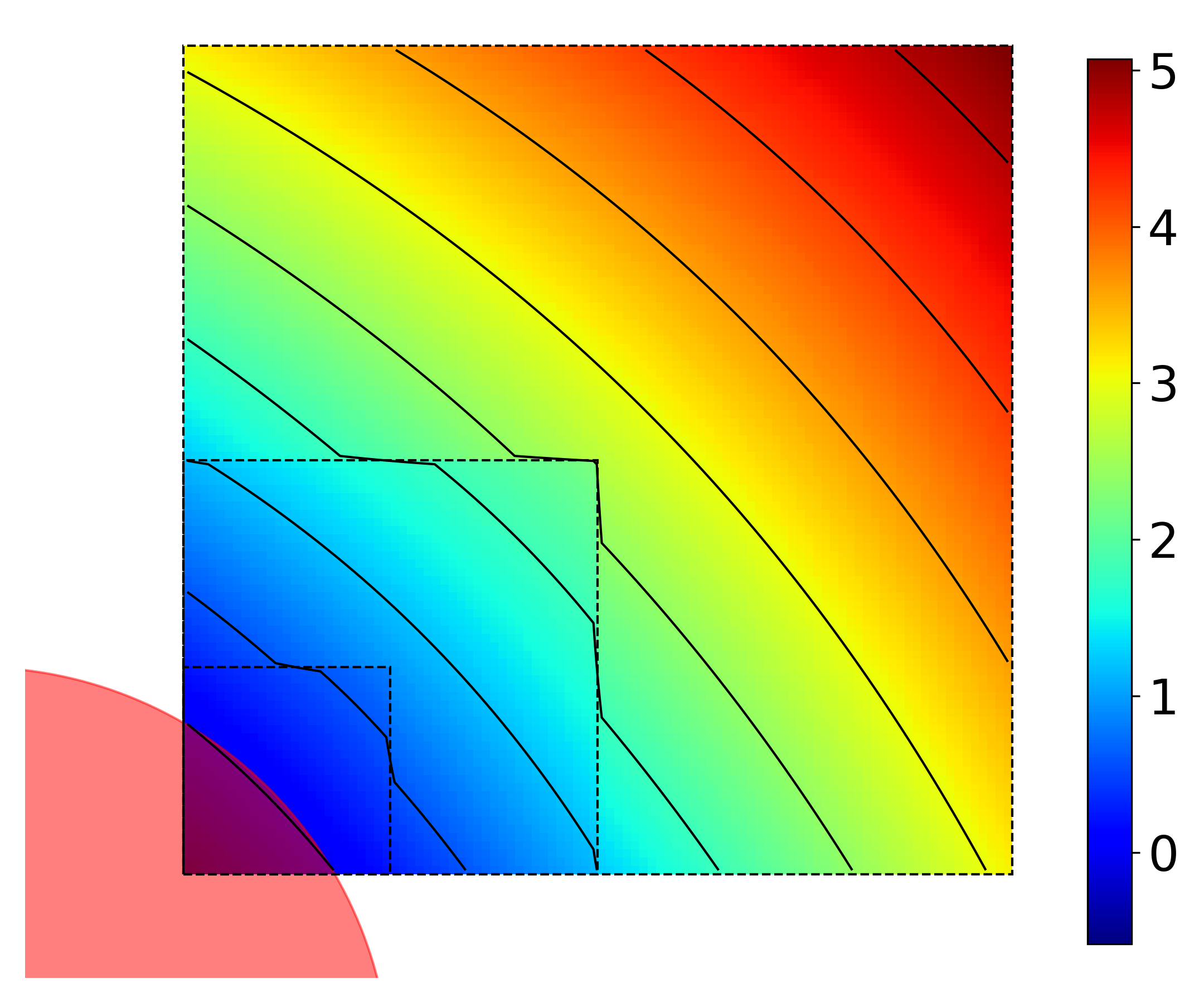}
      \caption{}
      \label{fig:comp_octree_structure_a}
    \end{subfigure}%
    \begin{subfigure}[t]{0.24\linewidth}
      \includegraphics[width=\linewidth]{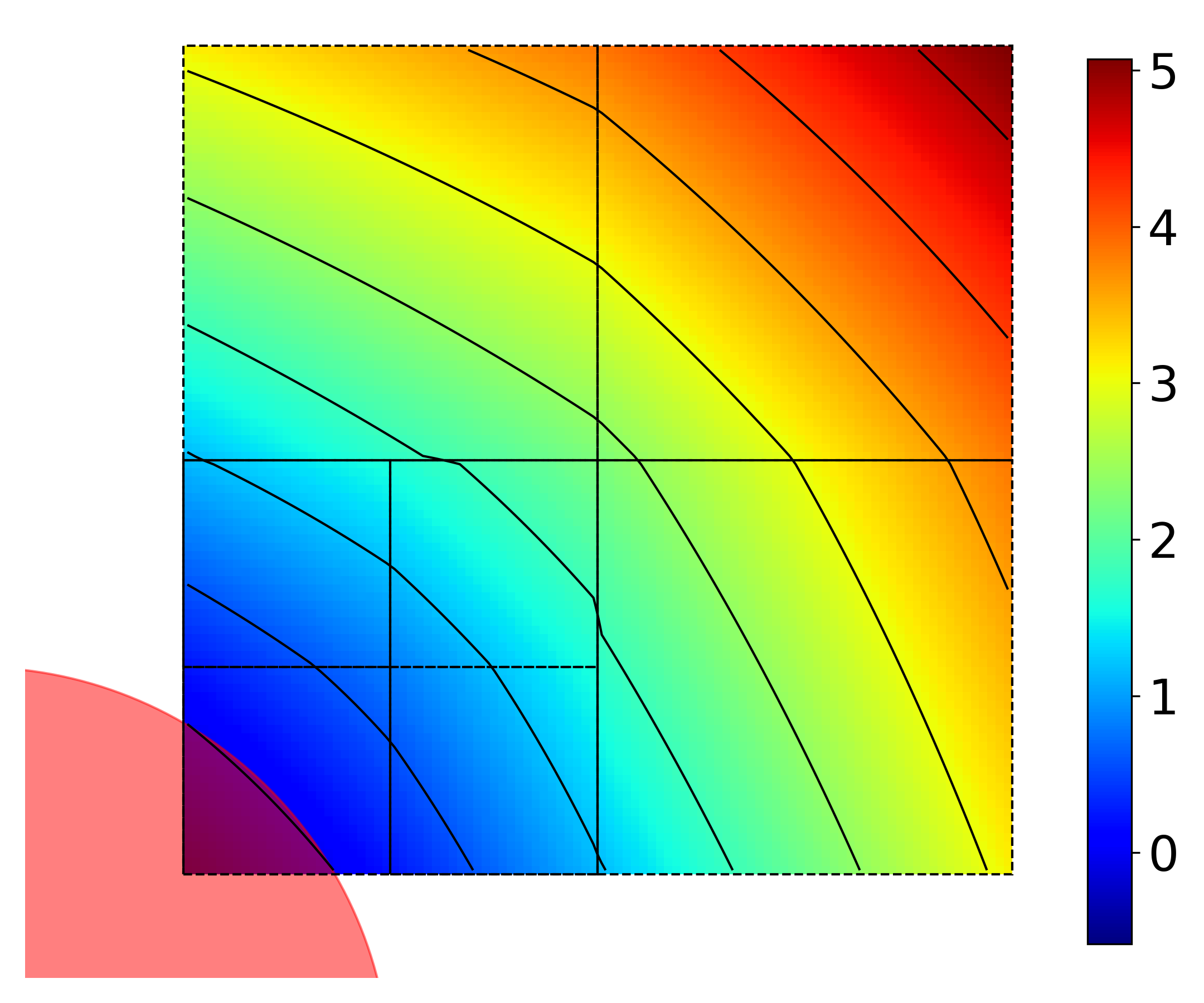}
      \caption{}
      \label{fig:comp_octree_structure_b}
    \end{subfigure}
    \begin{subfigure}[t]{0.24\linewidth}
      \includegraphics[width=\linewidth]{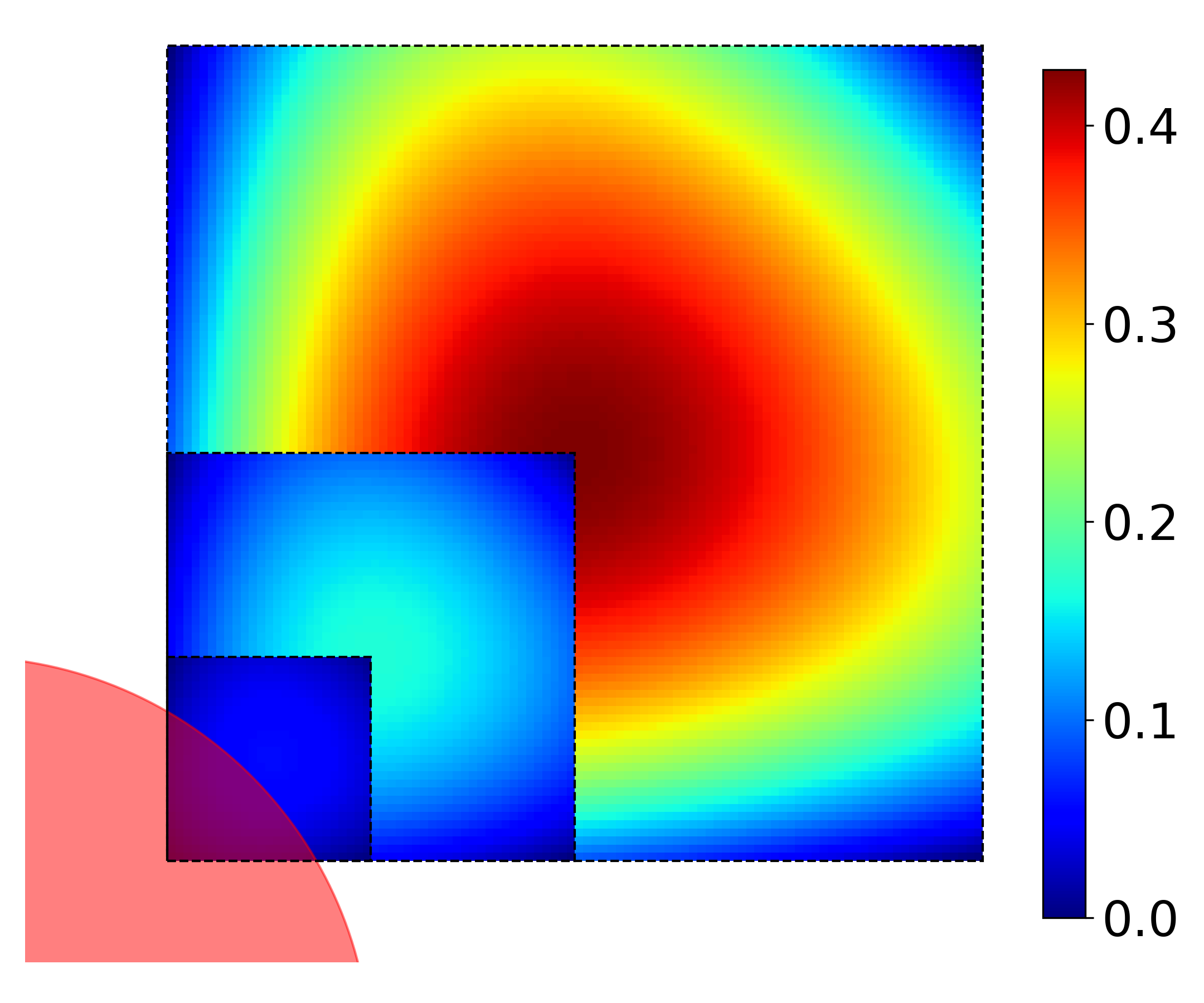}
      \caption{}
      \label{fig:comp_octree_structure_c}
    \end{subfigure}%
    \begin{subfigure}[t]{0.24\linewidth}
      \includegraphics[width=\linewidth]{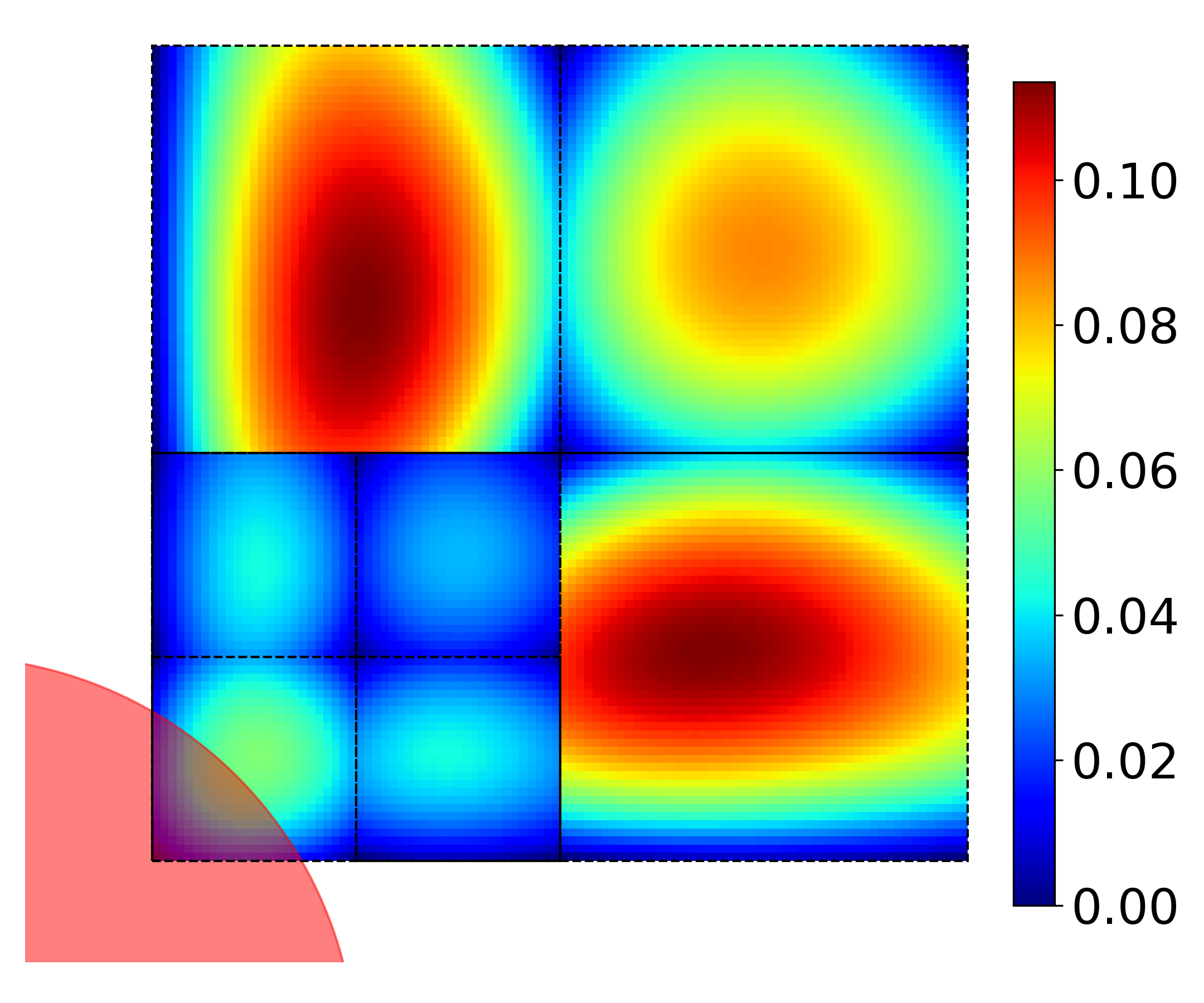}
      \caption{}
      \label{fig:comp_octree_structure_d}
    \end{subfigure}
  \caption{\small 2D visualization of SDF interpolation without gradient augmentation using (a) a sparse octree and (b) a semi-sparse octree with corresponding interpolation error shown in (c) and (d) respectively. The bottom-left red region is an obstacle containing one vertex.}
  \label{fig:comp_octree_structure}
  \vspace{-4.5ex}
\end{figure}

Fig. \ref{fig:comp_octree_structure} shows a 2D example of an SDF prior using trilinear interpolation in a sparse and a semi-sparse octree.  In a sparse octree, the SDF interpolation has more discontinuities on the octant boundaries compared to the result in a semi-sparse octree. Given a query point $\bfx$, the semi-sparse octree provides the smallest octant containing $\bfx$ that is not larger than the smallest octant found in the sparse octree. This guarantees that we can find vertices closer to $\bfx$ for computing the SDF prior because of the creation of sibling octants, which leads to a significantly smaller SDF interpolation error as indicated by Fig.~\ref{fig:comp_octree_structure_c} and Fig.~\ref{fig:comp_octree_structure_d}.

For any query position where the surface exists in the neighborhood, we can locate an octant no larger than $r\times2^{N-M}$. For queries distant to the surface, an empty large octant is sufficient for computing an accurate SDF prior using gradient-augmented interpolation, which is discussed next.

\subsubsection{Gradient-Augmented Interpolation}
\label{sec:gradient_augmented_interpolation}

\begin{figure*}
    \centering
    \begin{subfigure}[t]{0.16\linewidth}
        \includegraphics[width=\linewidth]{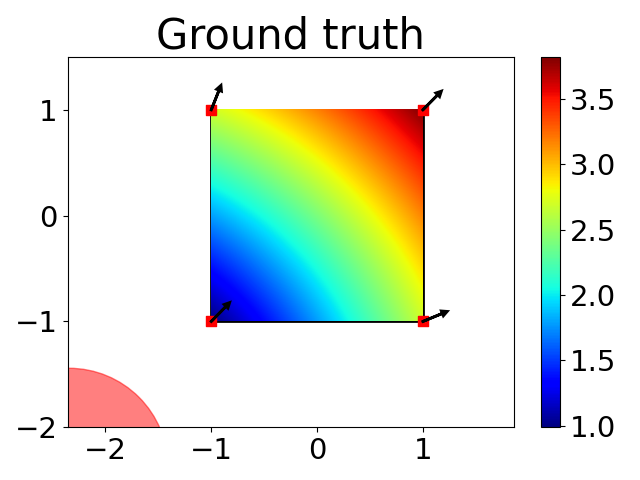}
        \vspace{-3ex}
    \end{subfigure}%
    \hfill%
    \begin{subfigure}[t]{0.16\linewidth}
        \includegraphics[width=\linewidth]{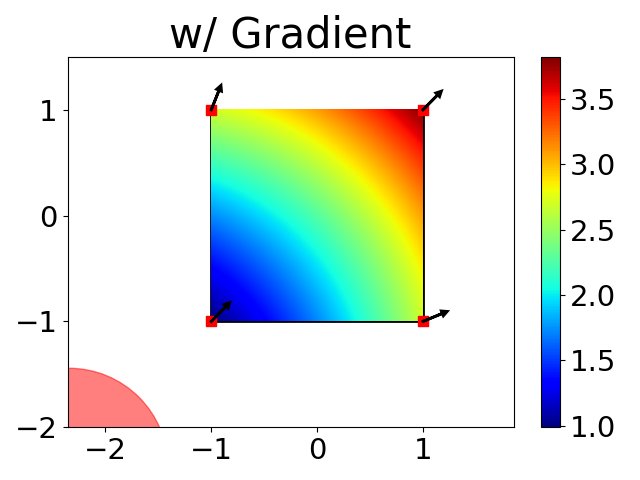}
        \vspace{-3ex}
    \end{subfigure}%
    \hfill%
    \begin{subfigure}[t]{0.16\linewidth}
        \includegraphics[width=\linewidth]{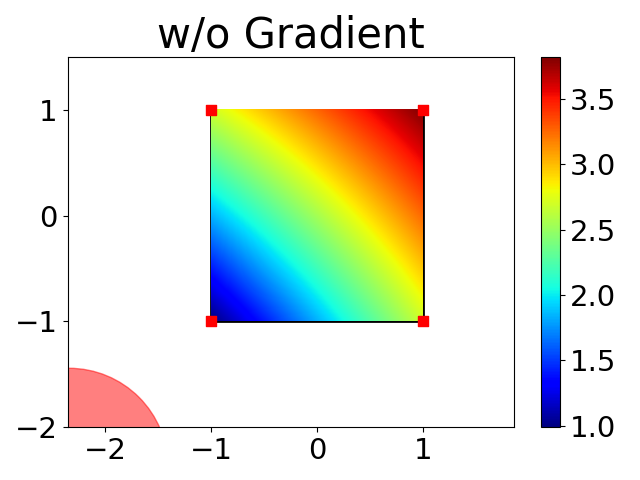}
        \vspace{-3ex}
    \end{subfigure}%
    \hfill%
    \begin{subfigure}[t]{0.16\linewidth}
        \includegraphics[width=\linewidth]{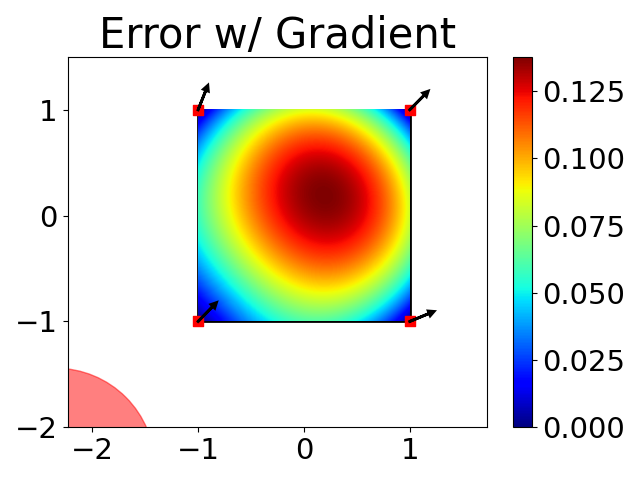}
        \vspace{-3ex}
    \end{subfigure}%
    \hfill%
    \begin{subfigure}[t]{0.16\linewidth}
        \includegraphics[width=\linewidth]{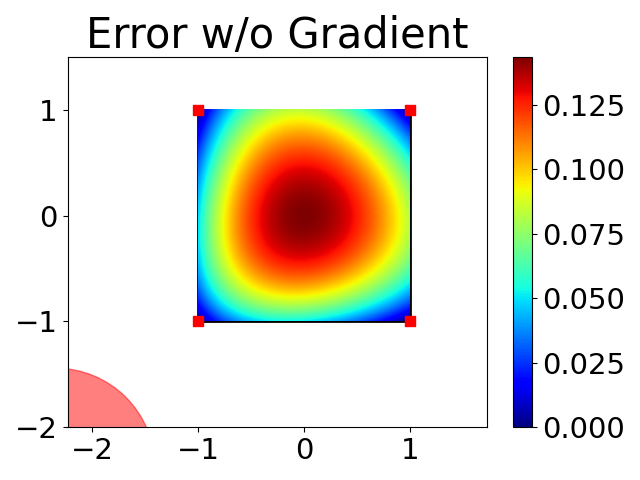}
        \vspace{-3ex}
    \end{subfigure}%
    \hfill%
    \begin{subfigure}[t]{0.16\linewidth}
        \includegraphics[width=\linewidth]{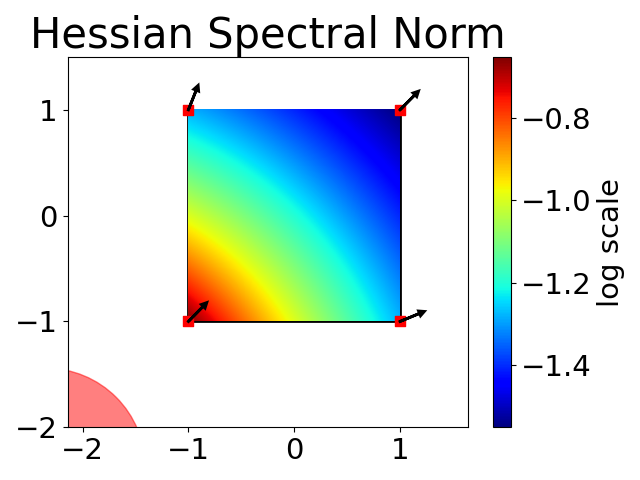}
        \vspace{-3ex}
    \end{subfigure}
    \begin{subfigure}[t]{0.16\linewidth}
        \includegraphics[width=\linewidth]{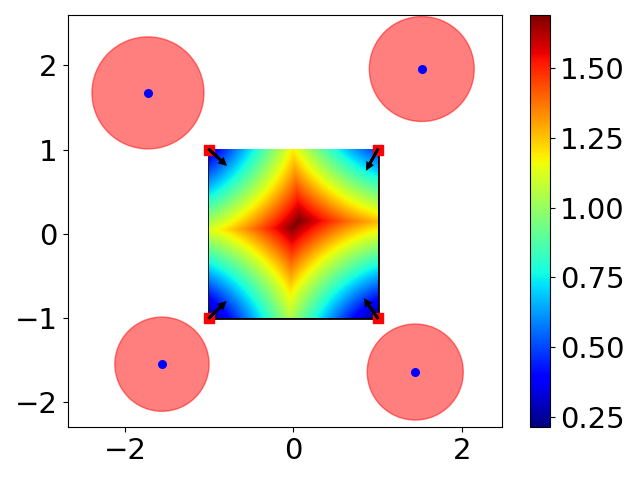}
        \vspace{-3ex}
        \caption{}
        \label{fig:comp_interpolation_a}
    \end{subfigure}%
    \hfill%
    \begin{subfigure}[t]{0.16\linewidth}
        \includegraphics[width=\linewidth]{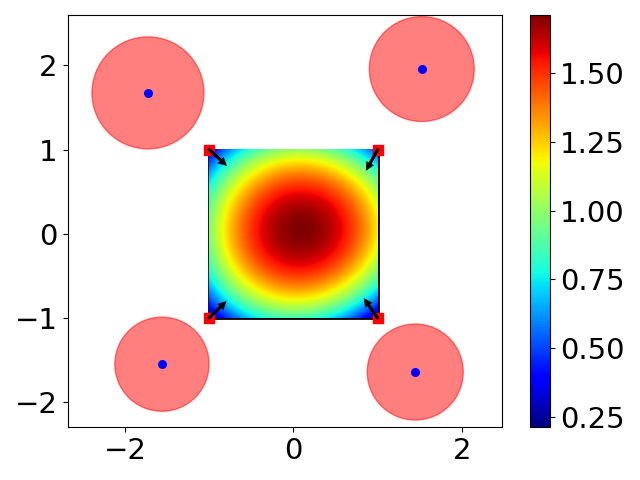}
        \vspace{-3ex}
        \caption{}
        \label{fig:comp_interpolation_b}
    \end{subfigure}%
    \hfill%
    \begin{subfigure}[t]{0.16\linewidth}
        \includegraphics[width=\linewidth]{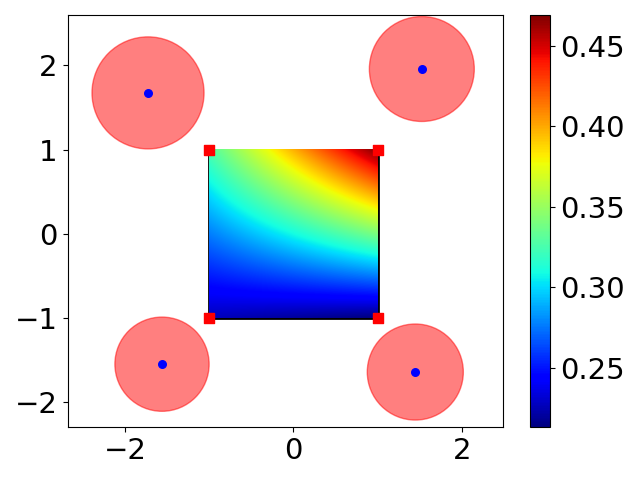}
        \vspace{-3ex}
        \caption{}
        \label{fig:comp_interpolation_c}
    \end{subfigure}%
    \hfill%
    \begin{subfigure}[t]{0.16\linewidth}
        \includegraphics[width=\linewidth]{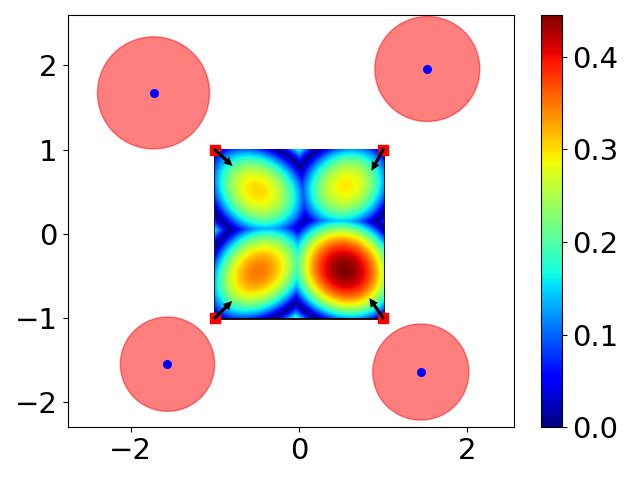}
        \vspace{-3ex}
        \caption{}
        \label{fig:comp_interpolation_d}
    \end{subfigure}%
    \hfill%
    \begin{subfigure}[t]{0.16\linewidth}
        \includegraphics[width=\linewidth]{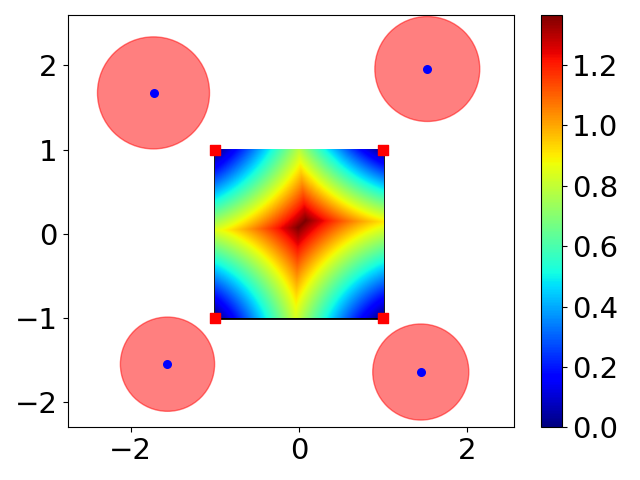}
        \vspace{-3ex}
        \caption{}
        \label{fig:comp_interpolation_e}
    \end{subfigure}%
    \hfill%
    \begin{subfigure}[t]{0.16\linewidth}
        \includegraphics[width=\linewidth]{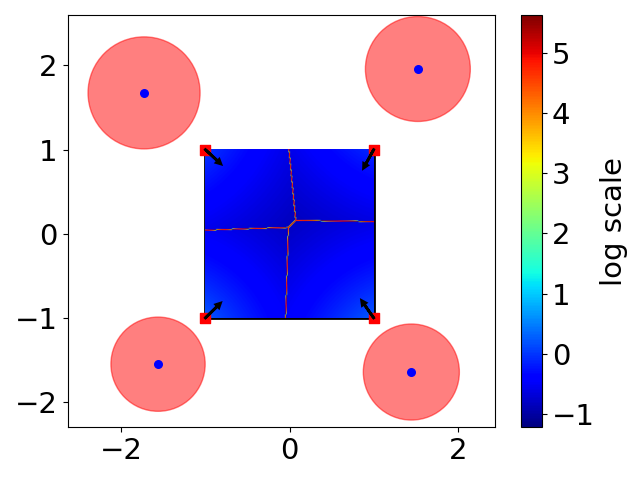}
        \vspace{-3ex}
        \caption{}
        \label{fig:comp_interpolation_f}
    \end{subfigure}
    \caption{\small 2D interpolation with and without gradient augmentation for one (red region, top row) and four obstacles (red regions, bottom row). Gradient-augmented interpolation produces a better SDF prior in (b) with smaller error in (d). Empirically, positions where the SDF gradient is not well defined (large Hessian spectral norm, (f)) have small interpolation error with gradient augmentation (d).}
    \label{fig:comp_interpolation}
    \vspace{-1em}
\end{figure*}

To achieve smaller SDF errors of the prior so that the subsequent neural network can focus on restoring the geometric details, we propose a new gradient-augmented trilinear interpolation method. Given the smallest octant that contains a query position $\bfx$, we first obtain an extrapolation result from each vertex $\bfx_k$:
\begin{equation}
    d_k(\bfx) = d_k + \bfg_k^\top (\bfx - \bfx_k),\ k\in\{1,\dots,8\}.
\end{equation}
Given the extrapolation results, we compute the gradient-augmented (ga) interpolation:
\begin{equation} \label{eq:sdf_prior}
    d_{ga}(\bfx) = \frac{1}{\gamma} \sum_{k=1}^8 w_k d_k(\bfx),\ \gamma=\sum_{k=1}^8 w_k,
\end{equation}
where $w_k = 1/|\diag{(\bfx_i - \bfx_{k})}|$ is the interpolation weight. It is a standard formulation but expressed in vector form for brevity. In contrast, the regular trilinear (tl) interpolation is:
\begin{equation} \label{eq:tl_interpolation}
    d_{tl}(\bfx) = \frac{1}{\gamma} \sum_{k=1}^8 w_k d_k,\ \gamma=\sum_{k=1}^8 w_k.
\end{equation}
Empirically, gradient-augmented interpolation generates more accurate SDF priors. Fig. \ref{fig:comp_interpolation} shows two 2D examples. Each row shows the ground truth SDF, interpolation results of using w/ and w/o gradient augmentation, the corresponding errors, and the Hessian spectral norm of SDF. As shown in Fig. \ref{fig:comp_interpolation_d} and \ref{fig:comp_interpolation_e}, gradient-augmented interpolation has smaller errors, especially when more obstacles are in the scene.
The gradient-augmented interpolation requires extra memory and computation for the gradient $\bfg_k$ and the extrapolation, respectively. However, theoretically, interpolation with gradient augmentation causes a smaller error upper bound.

\vspace{-1ex}
\begin{proposition}
    Consider an octant $\calV \subset \bbR^3$ of size $L$. Assume that the SDF $d(\bfx)$ is twice differentiable and the spectral norm of its Hessian is bounded:
    \begin{equation}\label{eq:bounded_hessian}
        M := \sup_{\bfx \in \calV} \| \nabla^2 d(\bfx) \|_2 < \infty.
    \end{equation}
    Assume that each vertex $\bfx_k$ has ground-truth SDF value $d_k=d(\bfx_k)$ and gradient $\bfg_k=\nabla d(\bfx_k)$. Then, given arbitrary $\bfx \in \calV$, the errors of gradient-augmented interpolation in \eqref{eq:sdf_prior} and trilinear interpolation in \eqref{eq:tl_interpolation} satisfy:
    \begin{equation}
    \begin{aligned}
    e_{ga}(\bfx) &= |d_{ga}(\bfx)-d(\bfx)| \leq \bar{e}_{ga} = \frac{3ML^2}{8},\\
    e_{tl}(\bfx) &= |d_{tl}(\bfx)-d(\bfx)| \leq \bar{e}_{tl} = \frac{\sqrt{3}L}{2}.
    \end{aligned}
    \end{equation}
\end{proposition}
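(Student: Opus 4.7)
The plan is to reduce both bounds to a single second-moment identity for the (normalized) weights $p_k := w_k/\gamma$ and then peel off the two cases.

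First, I would place the octant at $[0,L]^3$, let $\alpha_j\in[0,1]$ denote the normalized coordinates of $\bfx$, and index the eight vertices by binary triples $(k_1,k_2,k_3)\in\{0,1\}^3$. Although $w_k = 1/\prod_{j}|x^{j}-x_{k}^{j}|$ is written as an inverse-distance-product, a short calculation shows that after normalization the weights collapse to the standard trilinear form $p_k = \prod_{j=1}^{3}\alpha_{j}^{k_{j}}(1-\alpha_{j})^{1-k_{j}}$, so in particular $\sum_{k}p_{k}=1$. An axis-wise factorization of $\sum_{k}p_{k}\|\bfx-\bfx_{k}\|_{2}^{2}$ then yields the moment identity
\[
\sum_{k=1}^{8}p_{k}\|\bfx-\bfx_{k}\|_{2}^{2} \;=\; L^{2}\sum_{j=1}^{3}\alpha_{j}(1-\alpha_{j}) \;\leq\; \tfrac{3L^{2}}{4},
\]
with the maximum attained at the octant center $\alpha_j=1/2$. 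This is the main technical step; the rest reduces to elementary inequalities.

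For $e_{ga}$, Taylor's theorem at each vertex together with $d_{k}=d(\bfx_{k})$ and $\bfg_{k}=\nabla d(\bfx_{k})$ gives some $\bfxi_{k}$ on the segment from $\bfx_{k}$ to $\bfx$ with
\[
d(\bfx)-d_{k}(\bfx) \;=\; \tfrac{1}{2}(\bfx-\bfx_{k})^{\top}\nabla^{2}d(\bfxi_{k})(\bfx-\bfx_{k}),
\]
which by~\eqref{eq:bounded_hessian} is bounded in absolute value by $\tfrac{M}{2}\|\bfx-\bfx_{k}\|_{2}^{2}$. Taking the $p_k$-weighted sum (using partition of unity to absorb $d(\bfx)$) and invoking the moment identity yields $e_{ga}(\bfx)\leq \tfrac{M}{2}\cdot\tfrac{3L^{2}}{4}=\tfrac{3ML^{2}}{8}$.

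For $e_{tl}$, I would use that any SDF is $1$-Lipschitz (an immediate consequence of the triangle inequality applied to~\eqref{eq:sdf_definition}), so $|d(\bfx)-d_{k}|\leq\|\bfx-\bfx_{k}\|_{2}$ for every vertex. Applying the $p_k$-weighted sum and then Jensen's inequality for the concave square root gives
\[
e_{tl}(\bfx) \;\leq\; \sum_{k}p_{k}\|\bfx-\bfx_{k}\|_{2} \;\leq\; \sqrt{\sum_{k}p_{k}\|\bfx-\bfx_{k}\|_{2}^{2}} \;\leq\; \tfrac{\sqrt{3}L}{2}.
\]
The main obstacle is the moment identity: it requires recognizing the unusual-looking weight formula as the standard trilinear weights and then carrying out the axis-wise factorization cleanly. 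Once that is in place, both error bounds follow in a couple of lines.
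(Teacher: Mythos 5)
Your proposal is correct, and its core is the same as the paper's: Taylor-expand $d$ at each vertex, take the $w_k$-weighted average, and bound the aggregated remainder. The differences are in the supporting steps, and they work in your favor. First, you actually verify that the normalized weights $p_k=w_k/\gamma$ reduce to the standard trilinear weights and prove the moment identity $\frac{1}{\gamma}\sum_k w_k\|\bfx-\bfx_k\|_2^2=L^2\sum_j\alpha_j(1-\alpha_j)\le 3L^2/4$; the paper only asserts the value $3L^2/4$ at the octant center without derivation, so your axis-wise factorization fills a real gap and shows the bound is attained exactly there. Second, for $e_{tl}$ the paper uses the mean value theorem together with the Eikonal property $\|\nabla d(\bfzeta_k)\|_2=1$ to get $e_{tl}\le\frac{1}{\gamma}\sum_k w_k\|\bfx-\bfx_k\|_2$ and then asserts this is at most $\sqrt{3}L/2$ (a naive per-term bound by the cube diagonal would only give $\sqrt{3}L$); you instead invoke $1$-Lipschitzness of the SDF, which is equivalent but needs no differentiability along the segment, and then justify the final constant via Jensen's inequality applied to your moment identity. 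So your write-up is not a different proof strategy, but it supplies the two computations the paper leaves implicit, which is exactly where the stated constants come from.
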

\vspace{-1ex}
\begin{proof}
    For each octant vertex $\bfx_k$, by Taylor's theorem:
    $d(\bfx) = d_k + \bfg_k^\top (\bfx - \bfx_k) + \frac{1}{2}(\bfx-\bfx_k)^\top \nabla^2 d(\bfxi_k) (\bfx - \bfx_k)$,
    for some $\bfxi_k$ on the line segment joining $\bfx$ and $\bfx_k$. Using \eqref{eq:sdf_prior} and \eqref{eq:bounded_hessian}, the gradient-augmented interpolation error satisfies:
    \begin{equation*}
        e_{ga}(\bfx) = \frac{1}{2\gamma} \biggl| \sum_{k=1}^8 w_k (\bfx-\bfx_k)^\top \nabla^2 d(\bfxi_k) (\bfx - \bfx_k) \biggr|
    \end{equation*}
    \begin{equation}
        \leq \frac{M}{2\gamma} \sum_{k=1}^8 w_k \|\bfx-\bfx_k\|_2^2 \le \frac{3ML^2}{8} = \bar{e}_{ga},
    \end{equation}
    where equality holds when $\bfx$ is the octant center so that $\frac{1}{\gamma}\sum_{k=1}^8 w_k \|\bfx-\bfx_k\|_2^2=3L^2/4$.
    Similarly, for the error of the regular trilinear interpolation, we have:
    \begin{equation}
        d(\bfx) = d_k + \nabla d(\bfzeta_k)^\top (\bfx-\bfx_k)
    \end{equation}
    for some $\bfzeta_k$ on the line segment joining $\bfx$ and $\bfx_k$. Then, since $\|\nabla d(\bfzeta_k)\|=1$ and using \eqref{eq:tl_interpolation}, the error satisfies:
    \begin{align}
        e_{tl}(\bfx) &= \biggl|\frac{1}{\gamma}\sum_{k=1}^K w_k \nabla d(\bfxi_k)^\top (\bfx-\bfx_k)\biggr| \\
        &\leq \frac{1}{\gamma}\sum_{k=1}^K w_k \|\bfx - \bfx_k\|_2 \le \frac{\sqrt{3}L}{2} = \bar{e}_{tl}. \notag \qedhere
    \end{align}
    \vspace{-1em}
\end{proof}

When an octant does not contain positions where the gradient is not well defined (e.g., the medial axes where the closest point projection is not unique), \eqref{eq:bounded_hessian} holds. For positions without well-defined gradients, although the Hessian norm blows up mathematically, gradient-augmented interpolation has $e_{ga}(\bfx)$ significantly lower than $\bar{e}_{ga}$ based on empirical observation. The second row of Fig. \ref{fig:comp_interpolation_d} and \ref{fig:comp_interpolation_f} shows an example of such cases, that the Hessian spectral norm is large on the medial axes, but gradient-augmented interpolation has smaller errors. Since we are looking for an upper bound on the error, we ignore such cases.

As shown in Fig. \ref{fig:comp_interpolation}, the gradient-augmented interpolation has smaller errors than without gradient augmentation.
Empirically, as shown in Fig. \ref{fig:comp_interpolation_f}, $M\ll 1$ so that $\bar{e}_{ga} / \bar{e}_{tl} = {\sqrt{3}ML}/{4} < 1$.
Especially, when the octant is surrounded by multiple obstacles, the SDF prior values stored at the vertices are smaller than the ground truth SDF values inside the octant. This makes SDF priors obtained from interpolation without gradient augmentation no larger than the vertex SDF values, leading to significantly larger errors.

Hence, the prior network $T(\bfx;\theta)$ of our method is a semi-sparse octree where each vertex has an estimate of SDF and gradient, i.e., $\theta=\{d_k, \bfg_k\}_{k=1}^K$, which are learnable parameters optimized together with the residual network. In the experiments, we maintain a semi-sparse octree for each scene with $N=8$, $M=5$ and $r=10$ cm.

\subsection{SDF Residual via Neural Feature Decoding}
\label{sec:sdf_residual}

The SDF prior's accuracy is limited by the octree resolution, lacking geometric details.
To achieve high fidelity, \methodname learns a residual correction to the SDF prior via a neural network $R(d_\text{ga}(\bfx), \bfx; \{\bff_i\}_i, \beta)$, a composition of octree feature interpolation $\bff(x) = \sum_k w_k \bff_k$ with implicit neural features $\bff_i \in \bbR^F$ stored at octree vertices and an MLP decoder $D(d, \bff(x); \beta)$.

To enable continual learning as the sensor moves, we assign each octant vertex an implicit feature $\bff_i$. Octree expansion automatically adds more features $\bff_i$ to near-surface regions in smaller octants.
The features $\bff_i$ are initialized to zero and optimized together with the network weights $\beta$.

As shown in Fig.~\hyperref[fig:method_overview]{2d}, for each query point $\bfx$, we locate the leaf octant containing $\bfx$ and compute the interpolated neural feature $\bff(\bfx) = \sum_{k=1}^8 w_k \bff_k$ with $\bff_k$ at the octant vertices, where $w_k$ is the same weight used in \eqref{eq:sdf_prior}.
The feature $\bff(\bfx)$ and the prior $d_\text{ga}(\bfx)$ are fed into a decoder to predict the SDF residual $\delta_d(\bfx)=D(d_\text{ga}(\bfx), f(\bfx);\beta)$.
In the evaluation, we have $F=3$ and the MLP has two 32-dim hidden layers with LeakyReLU activation.

\vspace{-0.5ex}
\subsection{Key Frame Selection}
\label{sec:key_frame_selection}

Real-time SDF learning requires a compact, representative set of training frames. We adopt the key-frame selection of $H_2$-Mapping \cite{jiang_h2-mapping_2023}. As shown in Fig.~\hyperref[fig:method_overview]{2a}, we insert a new sensor frame when the newly observed area compared with the last key frame is large enough, i.e., $\frac{|V_c \cap V_l|}{|V_c \cup V_l|} > c_{\min}$, where $V_c$ and $V_l$ are the sets of surface octants observed by the current frame and the last inserted key frame, respectively, and $c_{\min} \in [0, 1]$ is a threshold. This ensures the frames cover the observed surface with little overlap.

As key frames accumulate, we select only $W$ of them to maintain real-time operation. We incrementally select the frame that observes the most octants, mask them out, and repeat until $W$ frames are collected. If all octants are masked, we reset the mask except those from the last selected key frame, and continue. This ensures the selected key frames maximally cover the scene. In the evaluation, we set $W=8$.

\vspace{-1ex}
\subsection{Dataset Generation}
\label{sec:dataset_generation}

During online training, it is important to generate a high-quality dataset consisting of a small number of representative samples.
At time step $t$, suppose the set of selected key frame time steps is $\calT=\{k, 1\le k \le t\}, |\calT|\le W$.
For each frame $\calP_{i \in \calT \cup \{t\}}$, we randomly choose $\lfloor N / |\calT \cup \{t\}| \rfloor$ rays $\{\bfo_j=\bfo_i, \bfq_j \in \calP_i\}_{j=1}^N$ to generate samples. As shown in Fig.~\hyperref[fig:method_overview]{2b}, we generate three types of points for training:

\textit{1) Free-space Points}:
To learn the SDF in free space, we sample free-space points $\calP_F$ by drawing $n_F$ points $\{\bfx_n\}_{n=1}^{n_F}$ along each ray $j$: $\bfx_n = \bfo_j + \lambda (\bfq_j - \bfo_j)$, where $\lambda$ is drawn from the uniform distribution $\calU(\delta, 1-\delta)$ with margin $\delta > 0$;

\textit{2) Surface Samples and Perturbed Points}:
To provide supervision for the surface reconstruction, we generate $\calP_S$ by collecting the surface point $\bfq_j$ of each ray $j$ and the perturbed points $\calP_P$ by sampling $n_P$ points $\{\bfx_n\}_{n=1}^{n_P}$ along each ray $j$, i.e., $\bfx_n=\bfq_j + \alpha (\bfq_j - \bfo_j) / \|\bfq_j - \bfo_j\|_2$, where $\alpha$ has uniform distribution on $(-3\sigma, \sigma) \cup (\sigma, 3\sigma)$.

\textit{3) Ground Truth SDF Computation}:
For surface points $\calP_S$, we have ground truth SDF $d(\bfx)=0, \bfx \in \calP_S$. For perturbed points and free-space points, we approximate the ground truth as $\tilde{d}(\bfx)=\operatorname{sign}(\bfx)\min_{\bfy\in\calP_S}\|\bfx-\bfy\|_2$.

In our experiments, we have $W=8$, $N=20480$, $\delta=0.05$, $\sigma=0.06$, $n_F=1$, and $n_P=2$.

\subsection{Loss Functions}
\label{sec:loss_functions}

As shown in Fig.~\hyperref[fig:method_overview]{2e}, the SDF prior $d_{ga}(\bfx)=T(\bfx;\theta)$ and the neural residual $\delta_d(\bfx)=R(d_\text{ga}(\bfx), \bfx; \{\bff_i\}_i, \beta)$ are combined together to obtain a final SDF prediction:
\begin{equation}
    \hat{d}(\bfx) = d_{ga}(\bfx) + \delta_d(\bfx).
\end{equation}
It is important to design appropriate loss functions to train the octree and neural network parameters $\theta$, $\beta$.

\subsubsection{Reconstruction Loss}
We apply an L1 loss over the surface points $\calP_S$ and the perturbation points $\calP_P$ to capture the surface geometry, which is critical for accurate 3D reconstruction:
\begin{equation} \label{eq:loss_recon}
\begin{aligned}
    &\calL_{\text{recon}}=\underbrace{\frac{w_\text{recon}^S}{|\calP_S|}\sum_{\bfx\in\calP_S}\left|\hat{d}(\bfx)\right|}_{\calL_{\text{surface}}} + \underbrace{\frac{w_\text{recon}^P}{|\calP_P|}\sum_{\bfx\in\calP_P} \left(c_j^{\text{lo}} + c_j^{\text{up}}\right)}_{\calL_{\text{perturbation}}}, \\
    &c_j^{\text{lo}} = \max\left(e^{\eta (\underline{b}_j - |\hat{d}(\bfx)|)}-1, 0\right),
    c_j^{\text{up}} = \max\left(|\hat{d}(\bfx)| - \overline{b}_j, 0\right),
\end{aligned}
\end{equation}
where $w_\text{recon}^S$ and $w_\text{recon}^P$ are the corresponding weights and $\underline{b}_j$ and $\overline{b}_j$ are lower and upper bounds, respectively. We set $\eta=10$. For perturbed points, $\underline{b}_j=\sigma$, $\overline{b}_j=|\tilde{d}(\bfx)|$.

\subsubsection{Eikonal Loss}
To enforce the Eikonal property in \eqref{eq:sdf_constraints}, we apply another L1 loss for the gradient norm:
\begin{equation} \label{eq:loss_eik}
\calL_\text{eik} = \sum_{k\in\{S, P, F\}} \frac{w_\text{eik}^k}{|\calP_k|}\sum_{\bfx\in\calP_k} \left| \|\hat{\bfg}(\bfx)\|_2 - 1 \right|,
\end{equation}
over three different kinds of samples $\calP_S$, $\calP_P$ and $\calP_F$,
where $w_\text{eik}^k$ is the weight for the corresponding sample set $\calP_k$.
Here, $\hat{\bfg}(\bfx)$ is obtained from numerical differentiation instead of the auto gradient graph because the numerical differentiation involves more positions $\bfx \pm \epsilon \bfe_i$ for each dimension $i$, which helps the network converge and shows better training stability when the SDF gradient is not well defined at certain positions.
Although the gradient prior $\bfg_k$ may be constrained to a unit vector directly, we found that applying the Eikonal loss to $\hat{\bfg}(\bfx)$, which depends on $\bfg_k$, performs better.

\subsubsection{Projection Loss}
Although the Eikonal loss $\calL_\text{eik}$ enforces the gradient magnitude, the supervision for the gradient direction and SDF in the distant space is still missing. Hence, we propose a projection loss for free-space points $\calP_F$ that are collected along rays:
\begin{equation}
    \calL_\text{proj} = \frac{w_\text{proj}}{|\calP_F|} \sum_{\bfx \in \calP_F} \left| \hat{d}(\bfx) - \tilde{d}(\bfx) \right|.
\end{equation}
Although the above loss has a form similar to \eqref{eq:loss_recon}, we call it \textit{projection} loss because $\tilde{d}(\bfx)$ is actually a loose upper bound for the ground truth SDF $d(\bfx)$. The purpose of this loss is not to make the model predict $\tilde{d}(\bfx)$ exactly at $\bfx$ but to provide the implicit supervision of the gradient direction so that it speeds up the convergence of other loss functions.

In our experiments, we set $w_\text{recon}^S = 1000$, $w_\text{recon}^P = 200$, $w_\text{eik}^S = w_\text{eik}^F = 10$, $w_\text{eik}^P = 3$, and $w_\text{proj} = 100$.
\section{Evaluation}
\label{sec:evaluation}

In this section, we compare \methodname with four baselines: Voxblox \cite{oleynikova_voxblox_2017}, \htwomapping \cite{jiang_h2-mapping_2023}, PIN-SLAM \cite{pan_pin-slam_2024}, and HIO-SDF \cite{hio-sdf_2024}. We first examine mesh and SDF reconstructions as a qualitative comparison, then quantitatively evaluate the methods using different metrics. In addition, we perform an ablation study to evaluate the contribution of each component in our method. We use the Replica dataset \cite{replica19arxiv}, which provides eight synthesized scenes with one trajectory per scene to compare \methodname with the baselines. We also test all methods on real-data from the Newer College dataset \cite{newercollege2021}. For all methods and datasets, we use the ground truth poses. For PIN-SLAM, we disable its localization module.

\subsection{Qualitative Results}
\subsubsection{Mesh Reconstruction}

\begin{figure*}[t]
    \begin{center}
        \begin{subfigure}[t]{0.16\linewidth}
            \includegraphics[width=\linewidth]{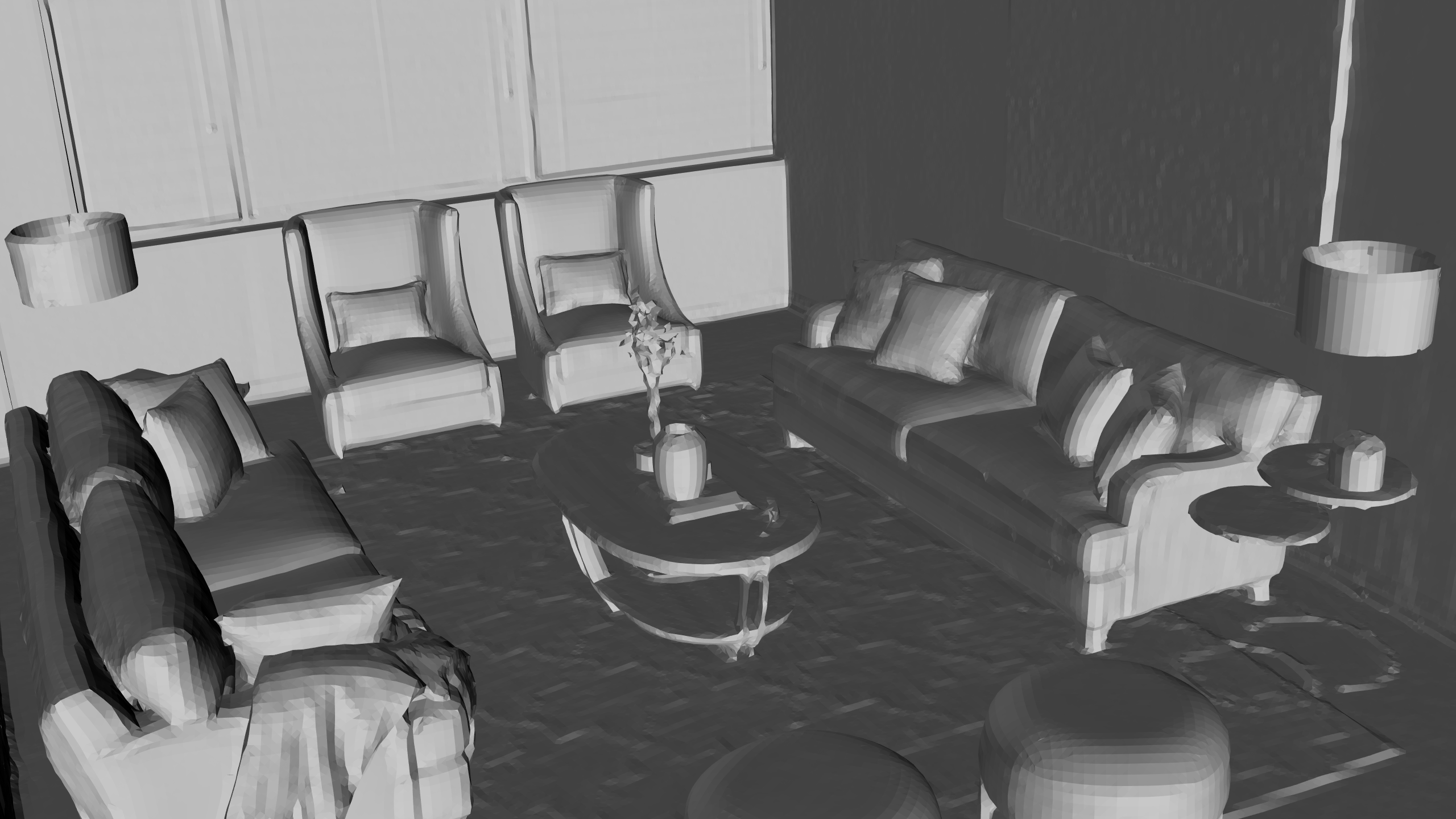}
        \end{subfigure}
        \begin{subfigure}[t]{0.16\linewidth}
            \includegraphics[width=\linewidth]{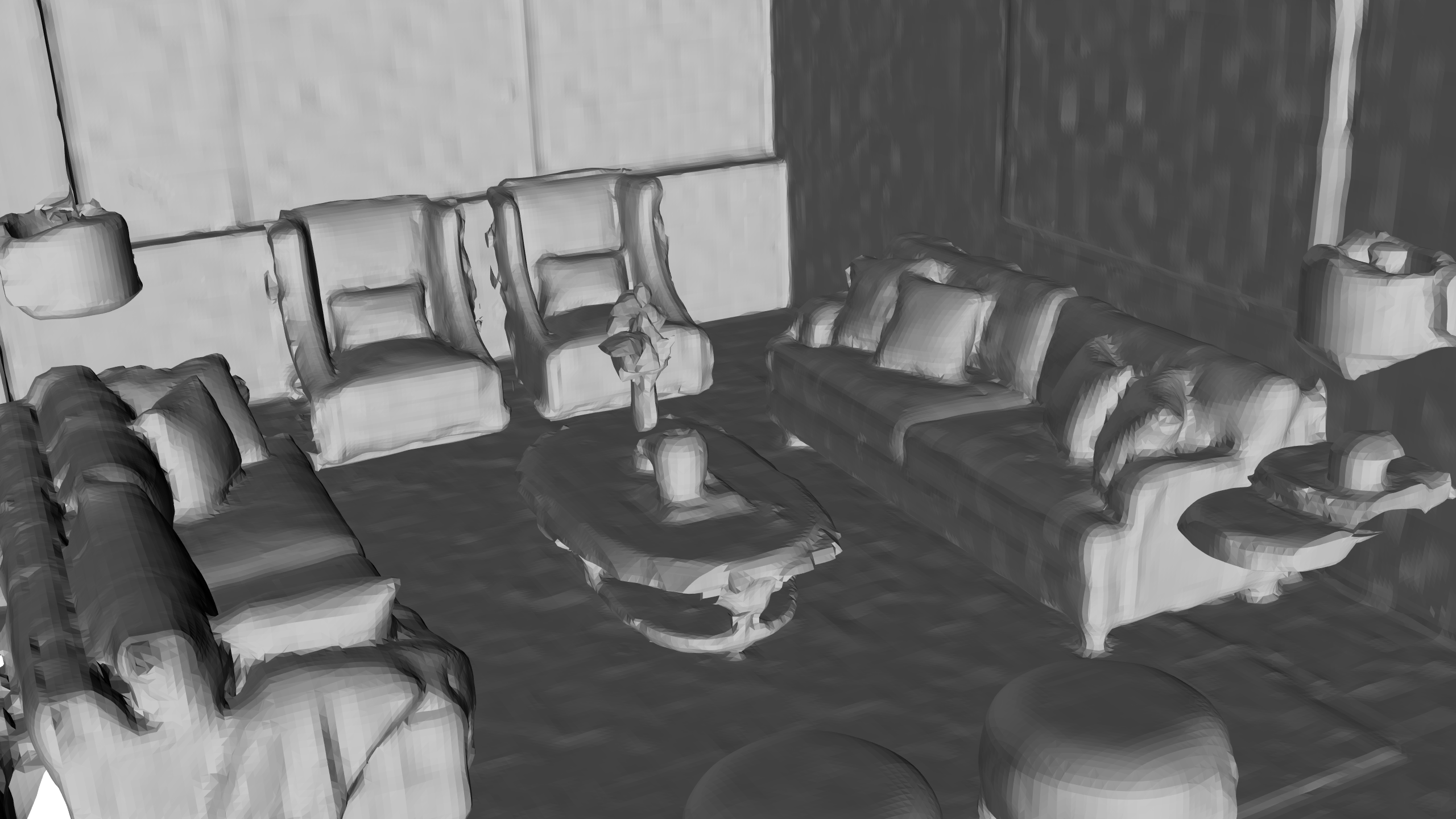}
        \end{subfigure}
        \begin{subfigure}[t]{0.16\linewidth}
            \includegraphics[width=\linewidth]{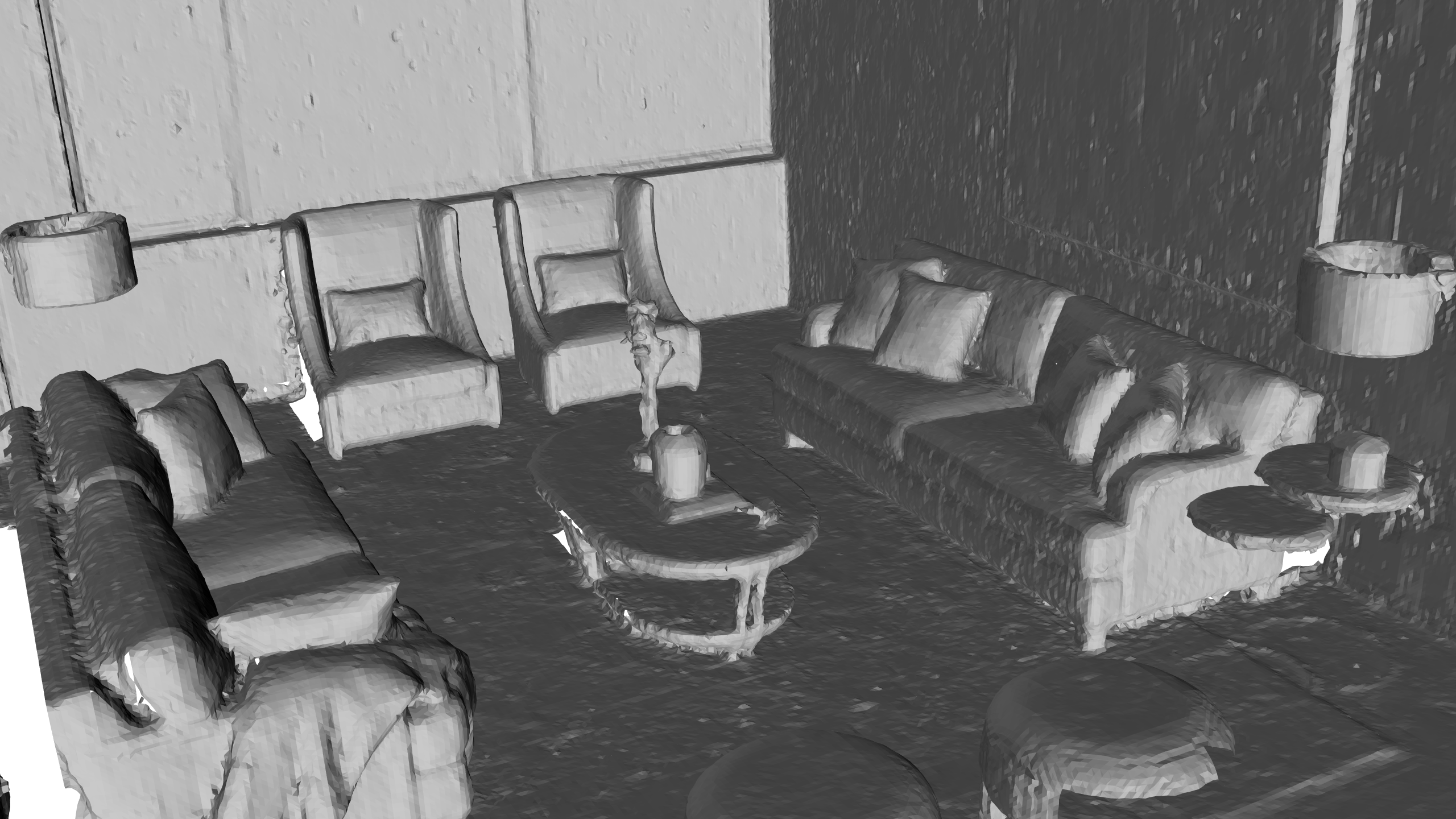}
        \end{subfigure}
        \begin{subfigure}[t]{0.16\linewidth}
            \includegraphics[width=\linewidth]{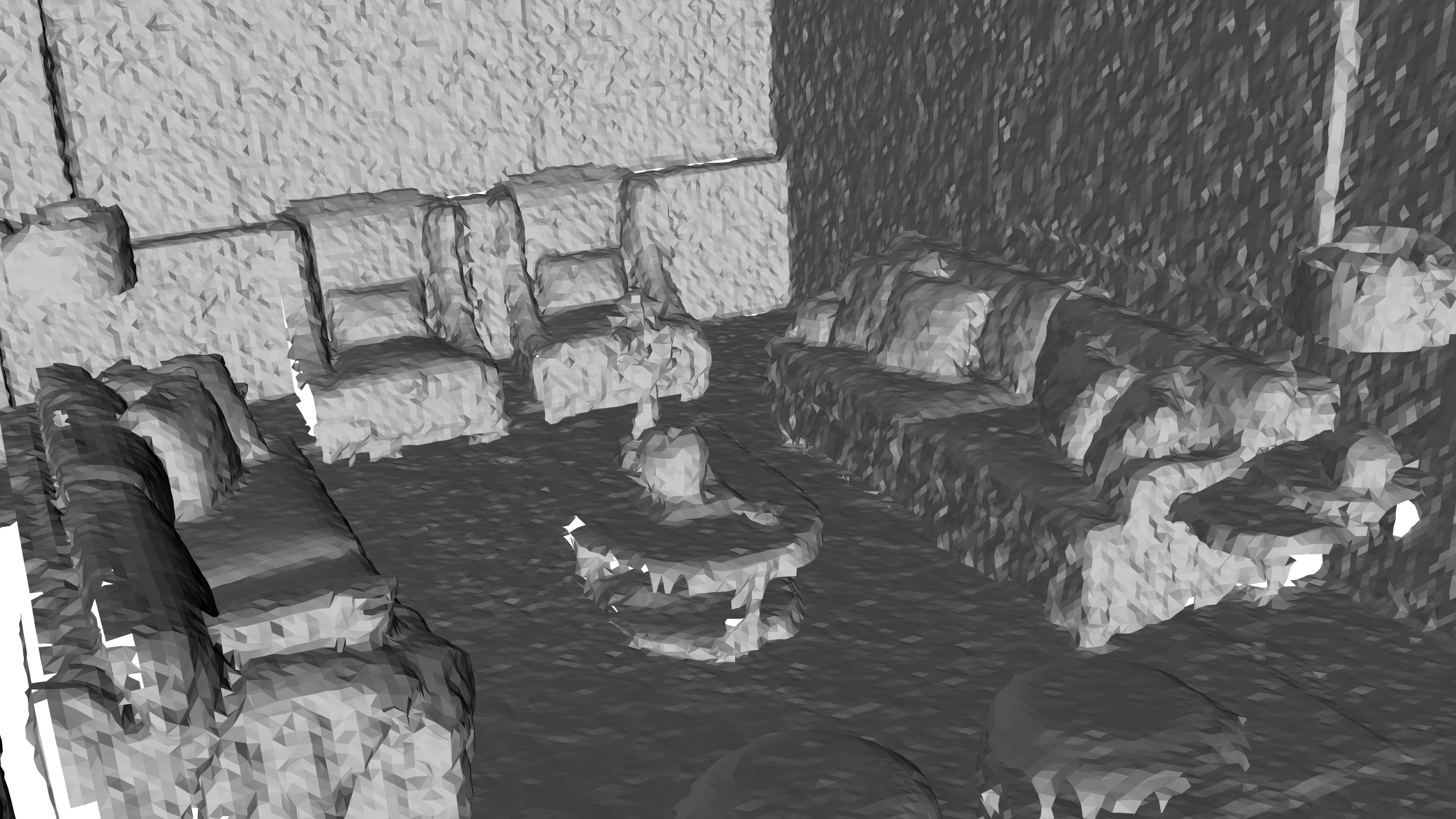}
        \end{subfigure}
        \begin{subfigure}[t]{0.16\linewidth}
            \includegraphics[width=\linewidth]{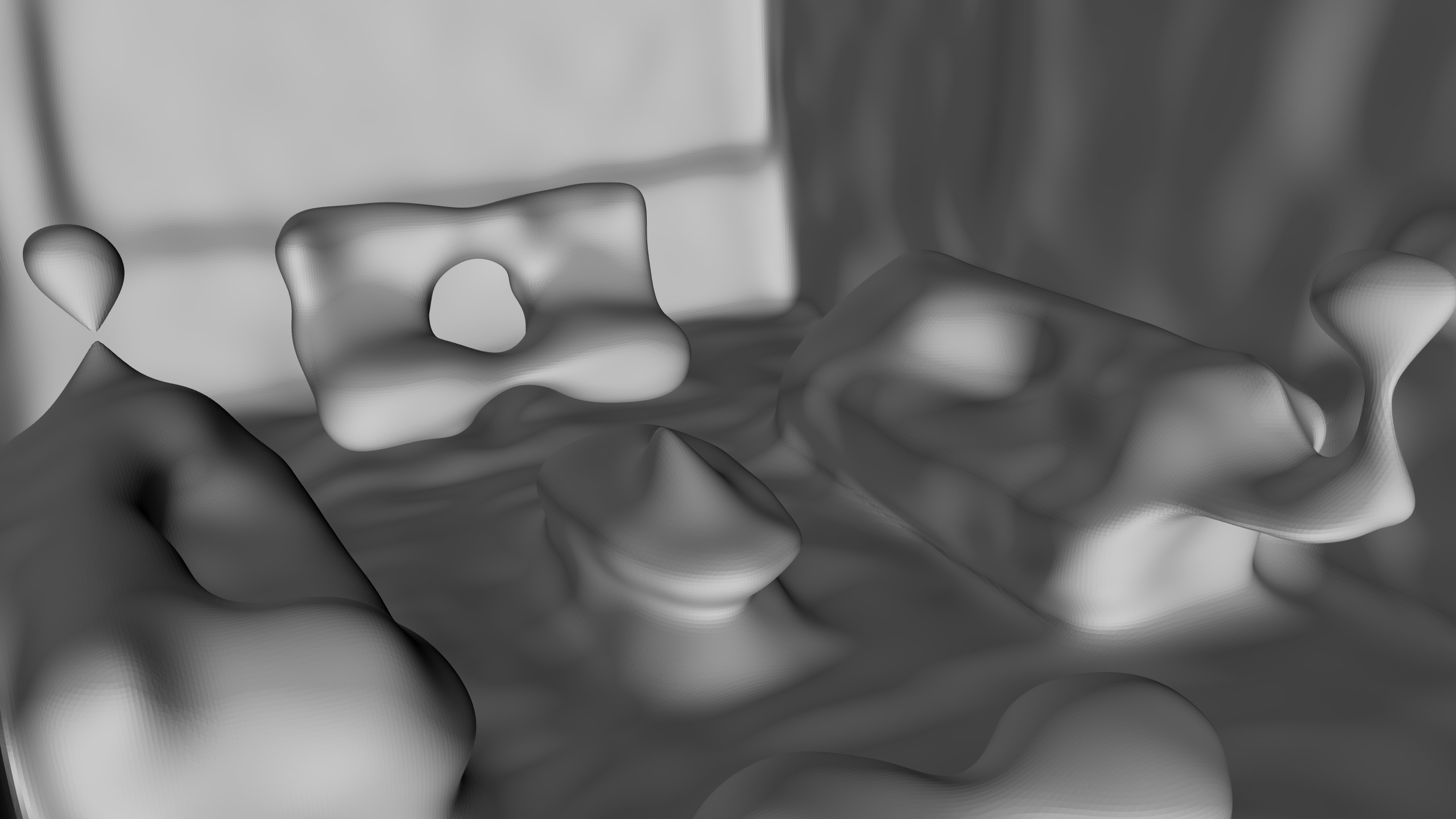}
        \end{subfigure}
        \begin{subfigure}[t]{0.16\linewidth}
            \includegraphics[width=\linewidth]{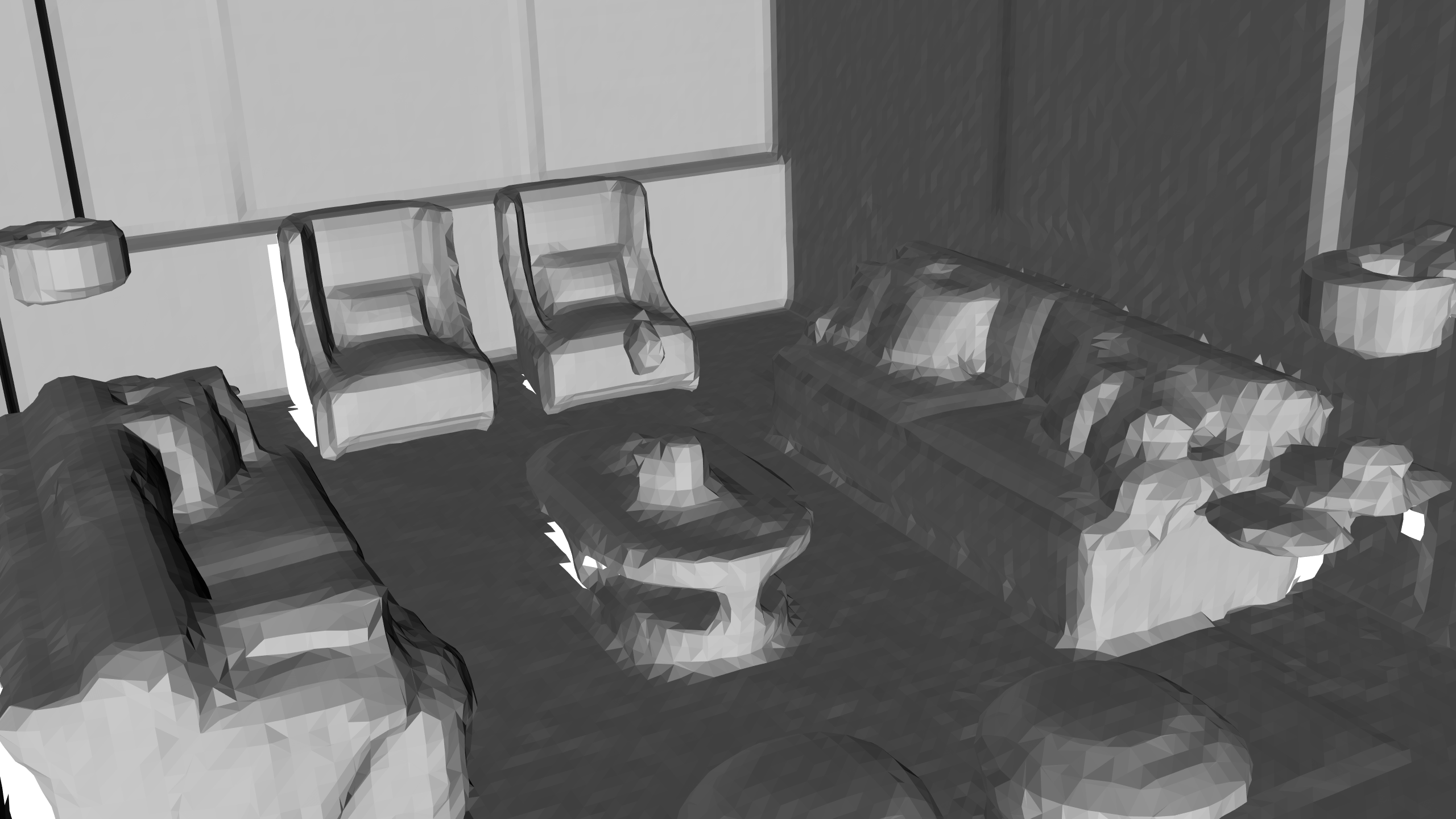}
        \end{subfigure}
        \begin{subfigure}[t]{0.16\linewidth}
            \includegraphics[width=\linewidth]{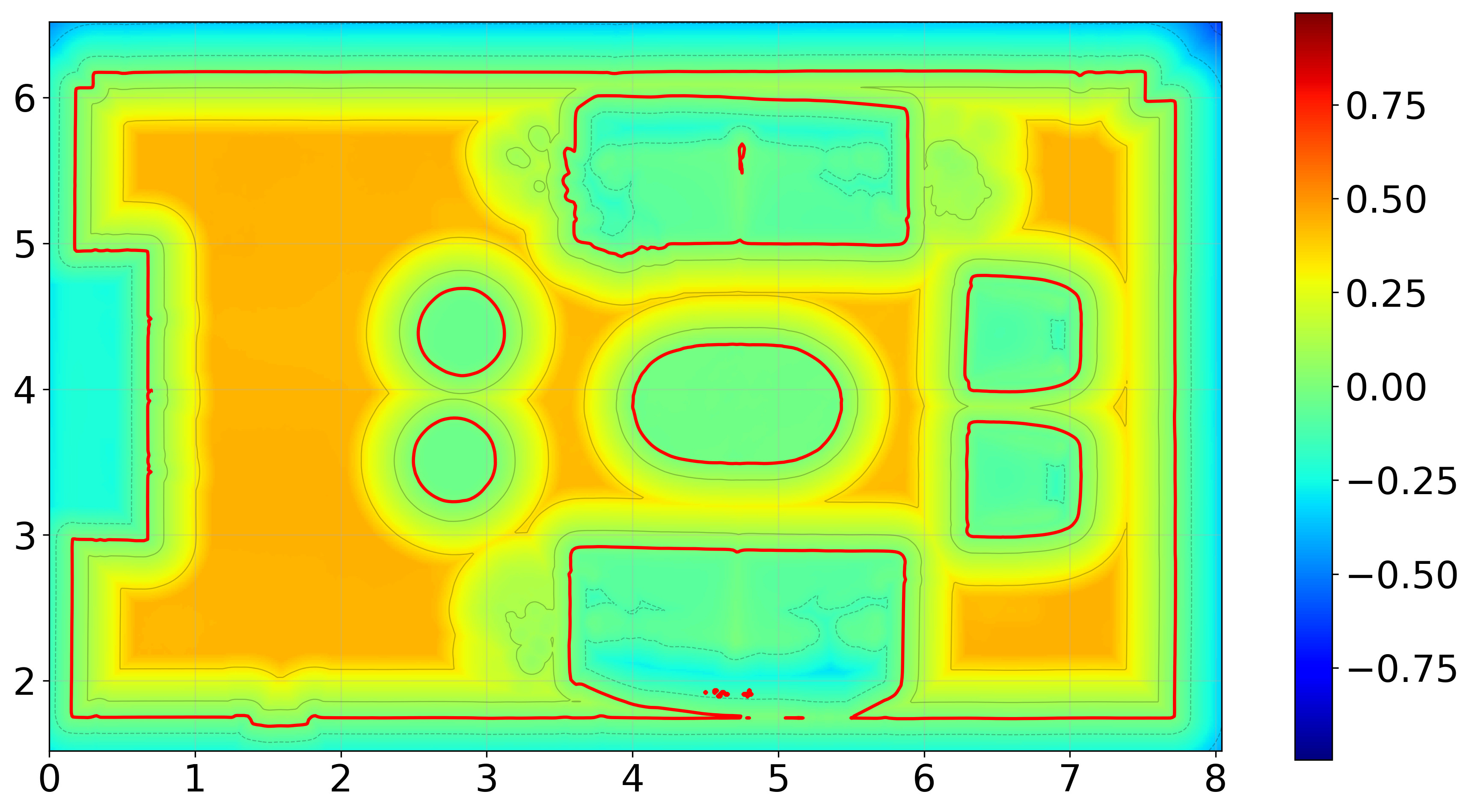}
            \caption{\footnotesize Ground Truth}
        \end{subfigure}
        \begin{subfigure}[t]{0.16\linewidth}
            \includegraphics[width=\linewidth]{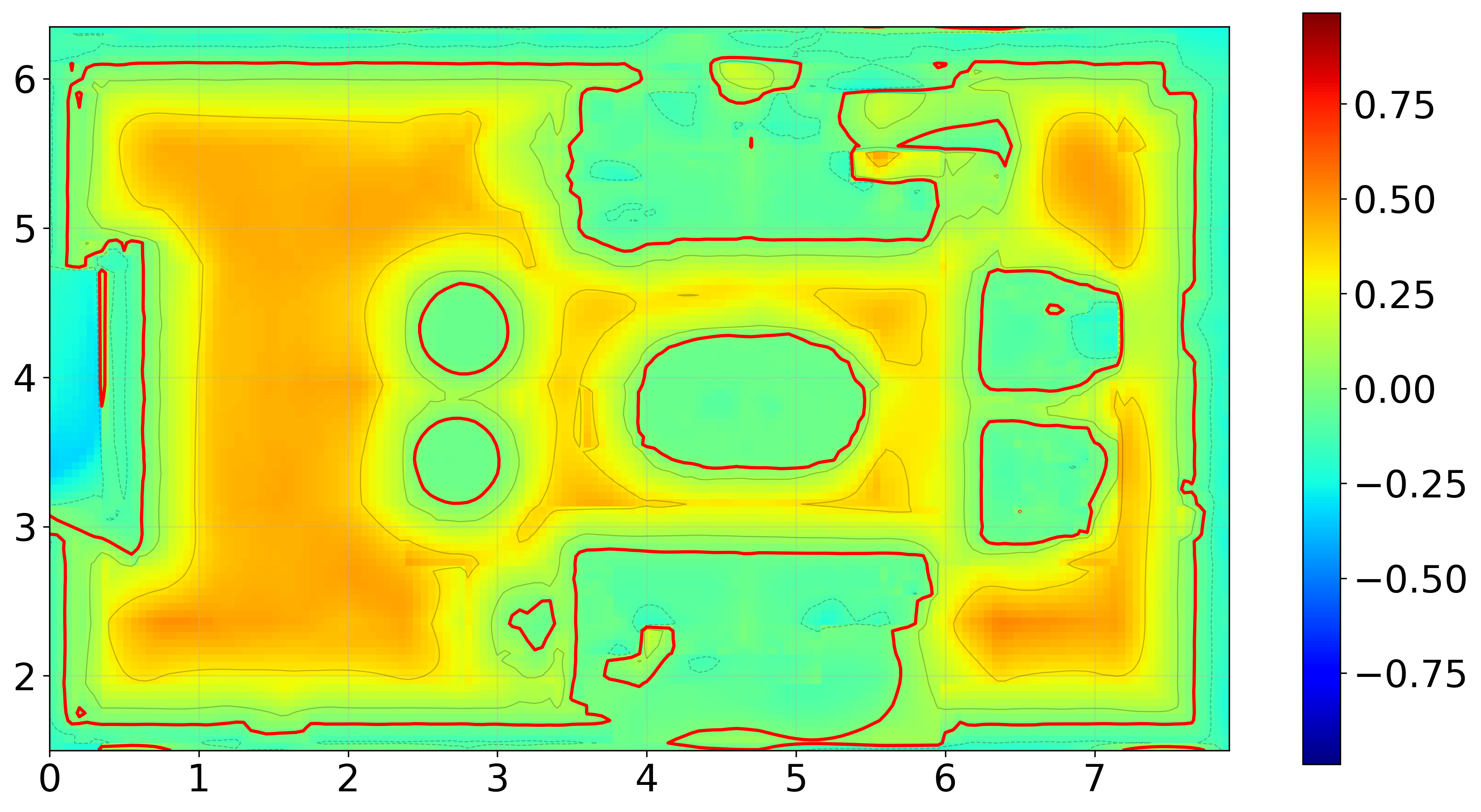}
            \caption{\footnotesize Ours}
        \end{subfigure}
        \begin{subfigure}[t]{0.16\linewidth}
            \includegraphics[width=\linewidth]{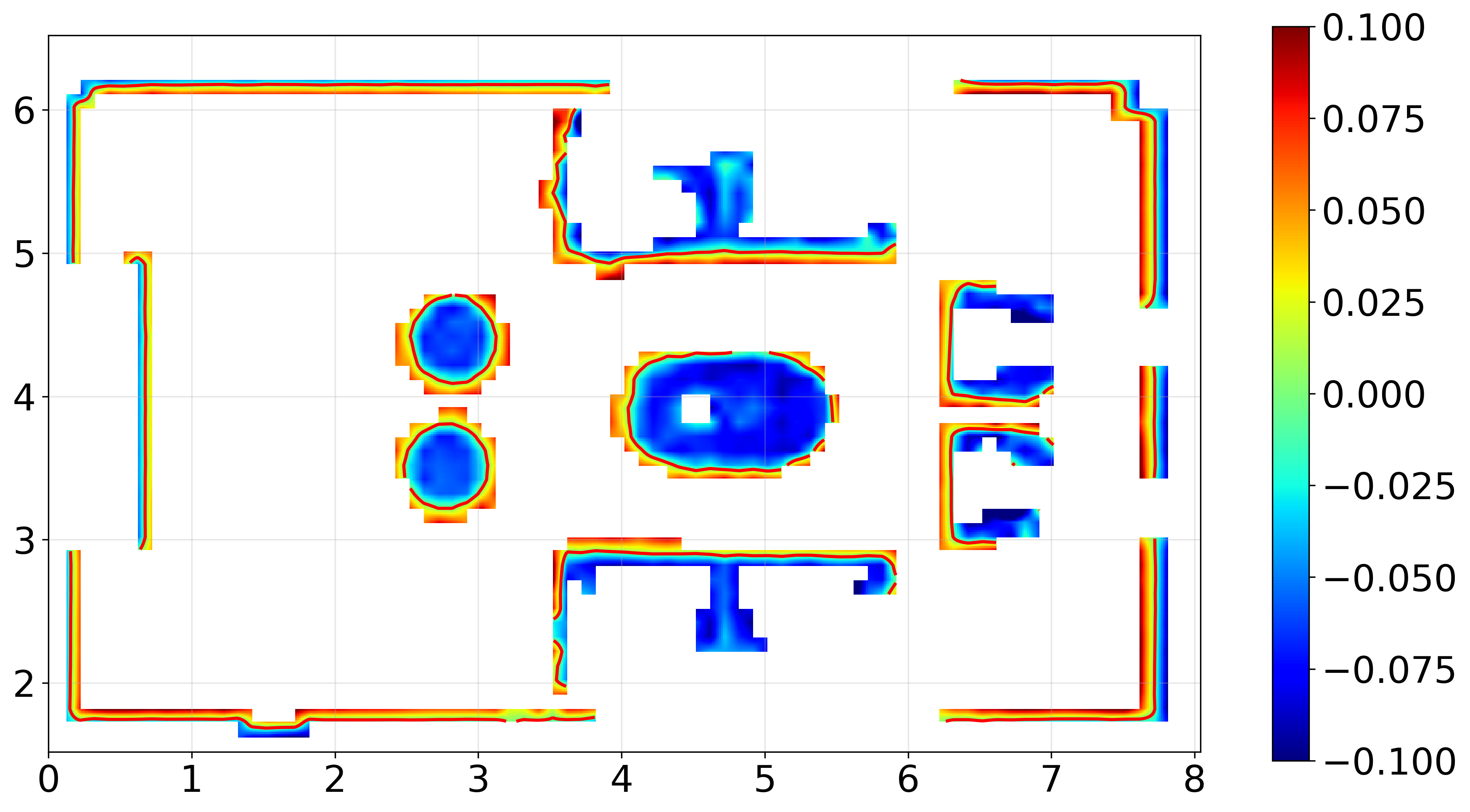}
            \caption{\footnotesize \htwomapping \cite{jiang_h2-mapping_2023}}
        \end{subfigure}
        \begin{subfigure}[t]{0.16\linewidth}
            \includegraphics[width=\linewidth]{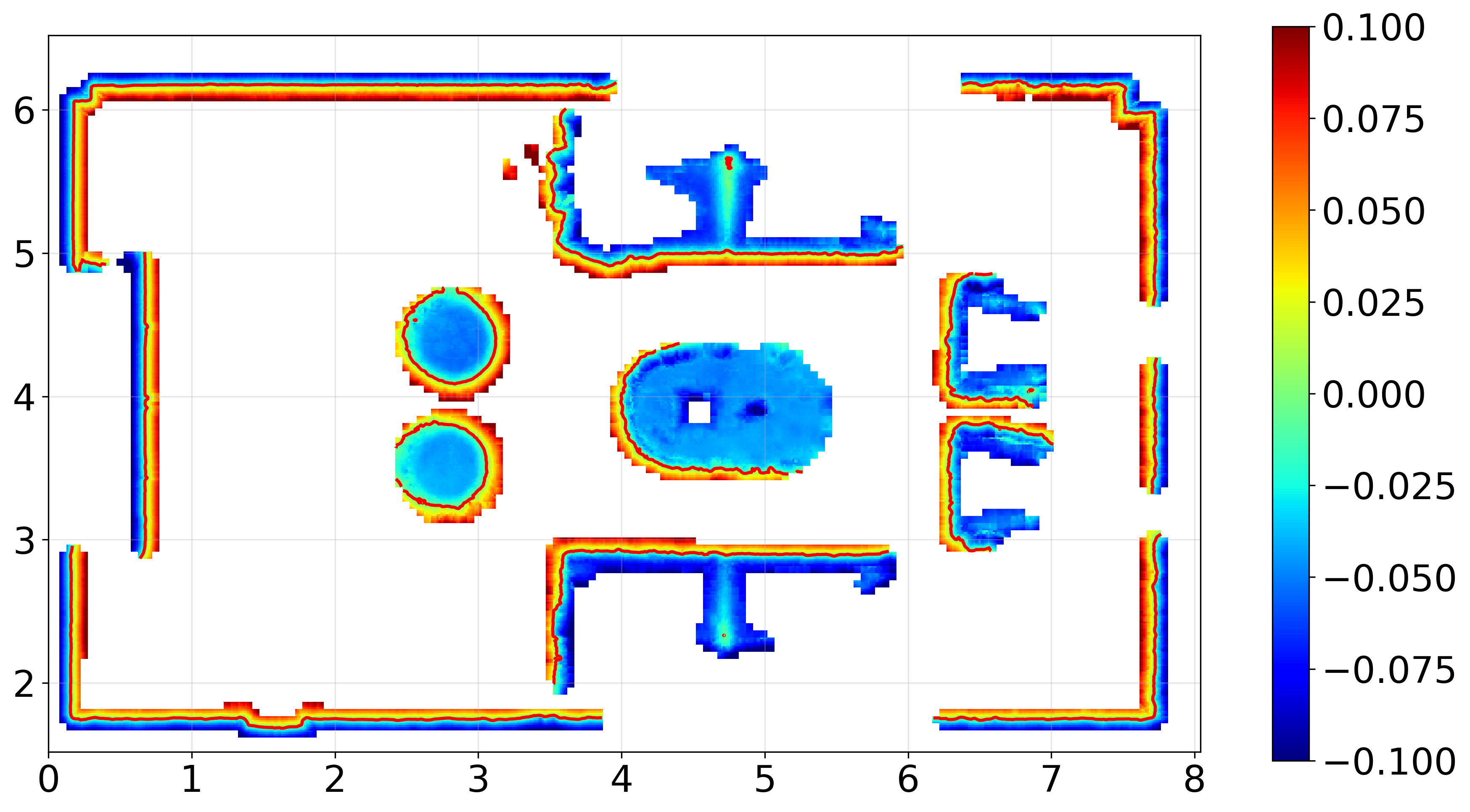}
            \caption{\footnotesize PIN-SLAM \cite{pan_pin-slam_2024}}
        \end{subfigure}
        \begin{subfigure}[t]{0.16\linewidth}
            \includegraphics[width=\linewidth]{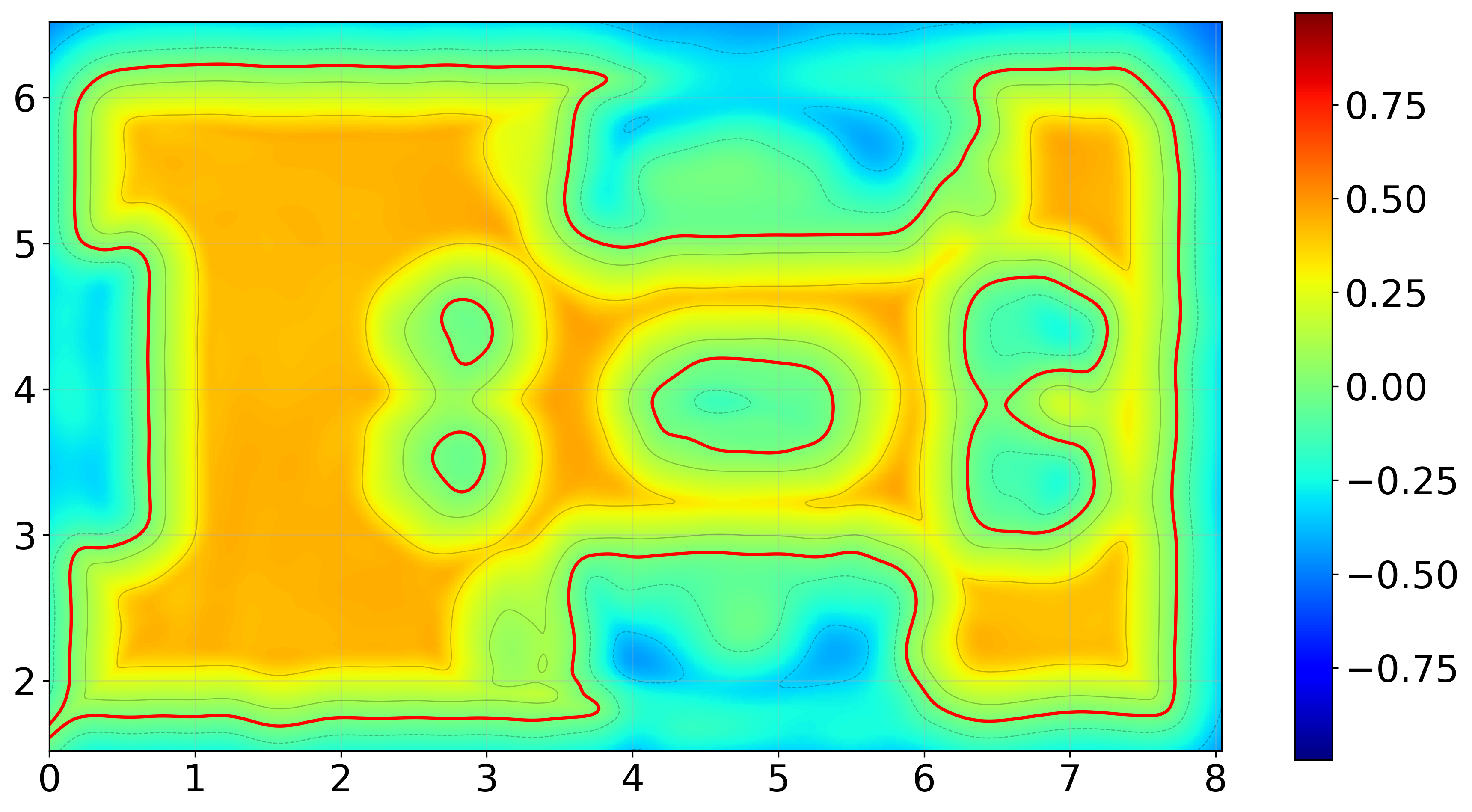}
            \caption{\footnotesize HIO-SDF \cite{hio-sdf_2024}}
        \end{subfigure}
        \begin{subfigure}[t]{0.16\linewidth}
            \includegraphics[width=\linewidth]{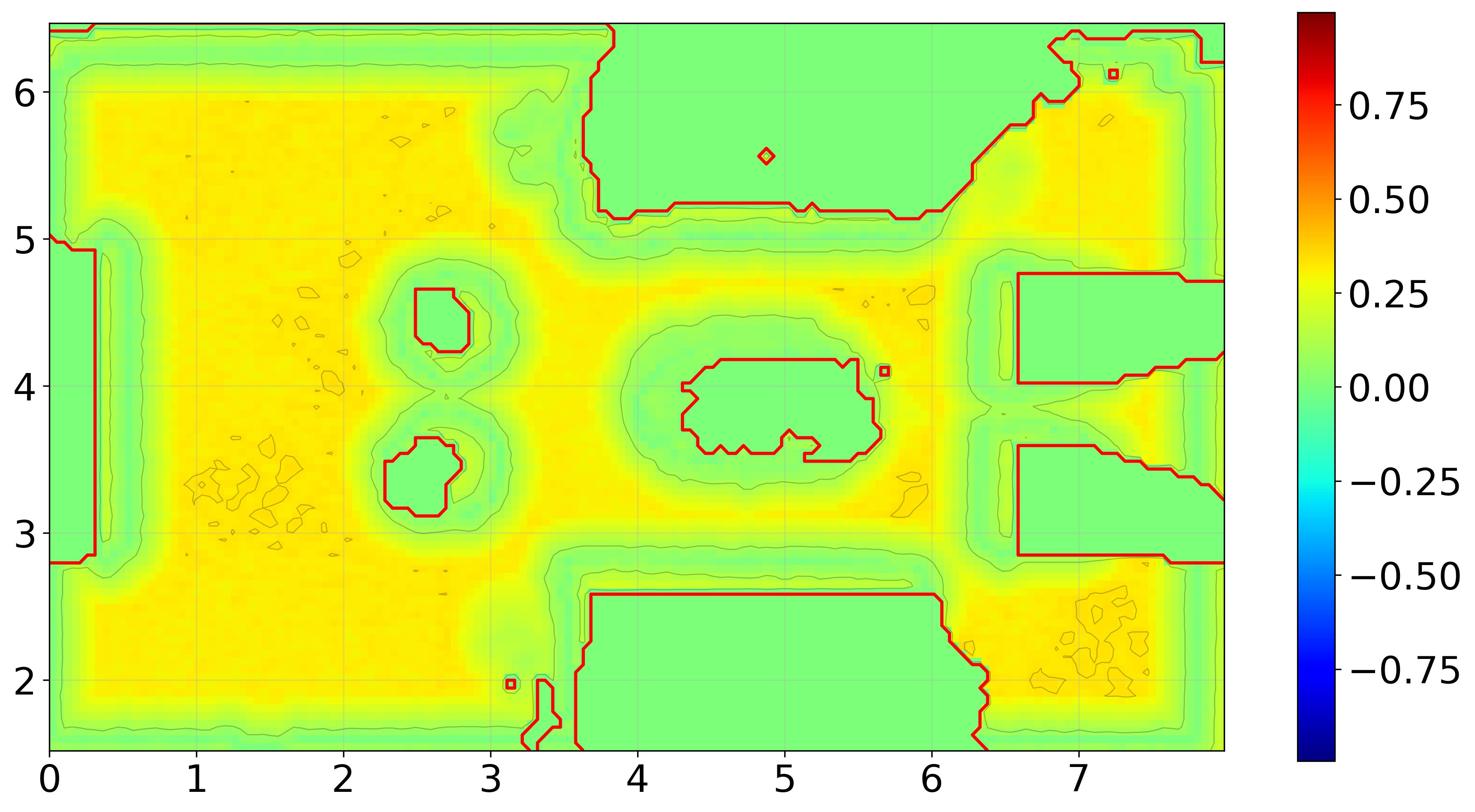}
            \caption{\footnotesize Voxblox \cite{oleynikova_voxblox_2017}}
        \end{subfigure}
    \end{center}
    \vspace{-2mm}
    \caption{\small Qualitative comparison of mesh reconstruction (top row) and z-plane slice of SDF reconstruction (bottom row) on Replica room 0 \cite{replica19arxiv}. \methodname reconstructs a mesh with the highest completion ratio and accurate SDF both near and far from the surface. \htwomapping and PIN-SLAM only learn truncated SDF. HIO-SDF learns an over smooth result. Voxblox significantly under-estimates the SDF.}
    \label{fig:qualitative}
    \vspace{-1em}
\end{figure*}

\begin{figure}[t]
    \centering
     \begin{subfigure}[t]{0.48\linewidth}
        \includegraphics[width=\linewidth]{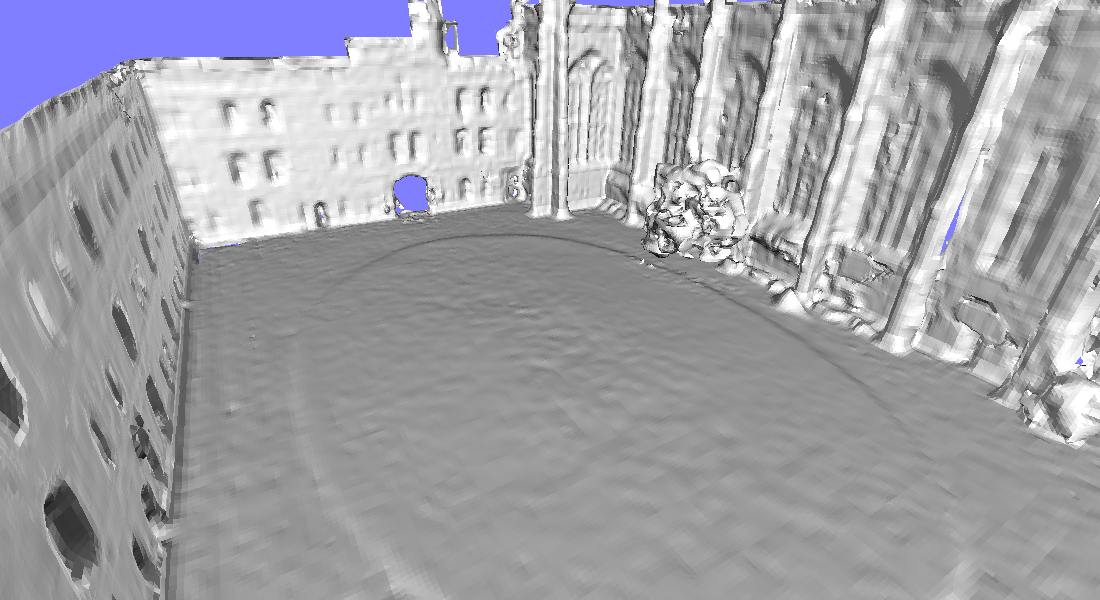}
        \caption{\footnotesize \methodname}
    \end{subfigure}%
    \hfill%
    \begin{subfigure}[t]{0.48\linewidth}
        \includegraphics[width=\linewidth]{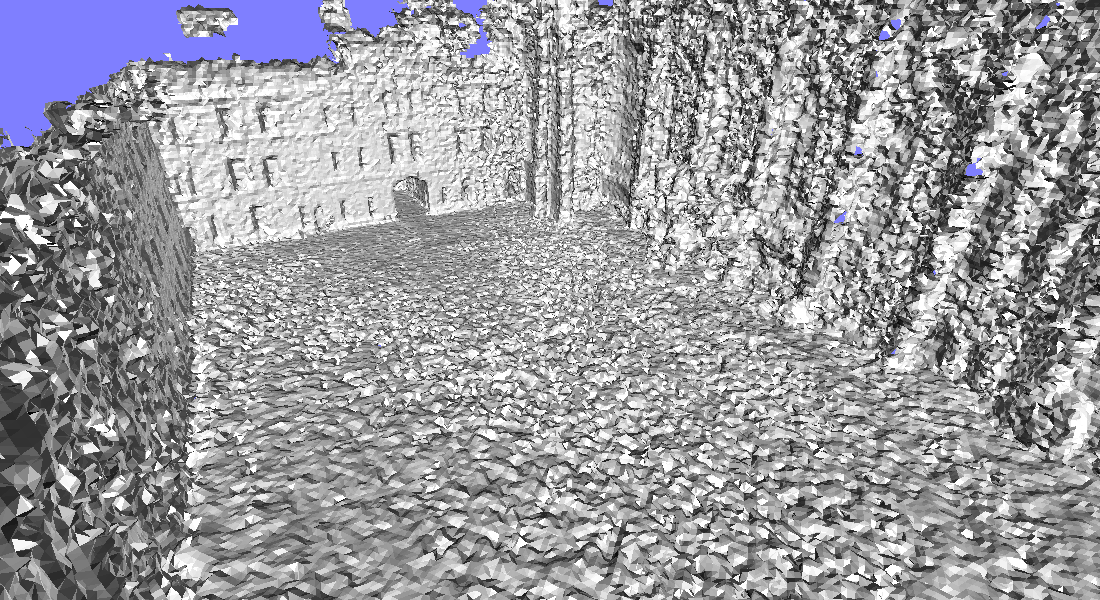}
        \caption{\footnotesize PIN-SLAM}
    \end{subfigure}%
\caption{\small Comparison of mesh reconstruction using \methodname versus PIN-SLAM \cite{pan_pin-slam_2024} on the Newer College dataset \cite{newercollege2021}.}
\label{fig:comp_mesh_newer_college}
\vspace{-3ex}
\end{figure}

\begin{figure}[t]
    \centering
    \begin{subfigure}[t]{0.24\linewidth}
        \includegraphics[width=\linewidth]{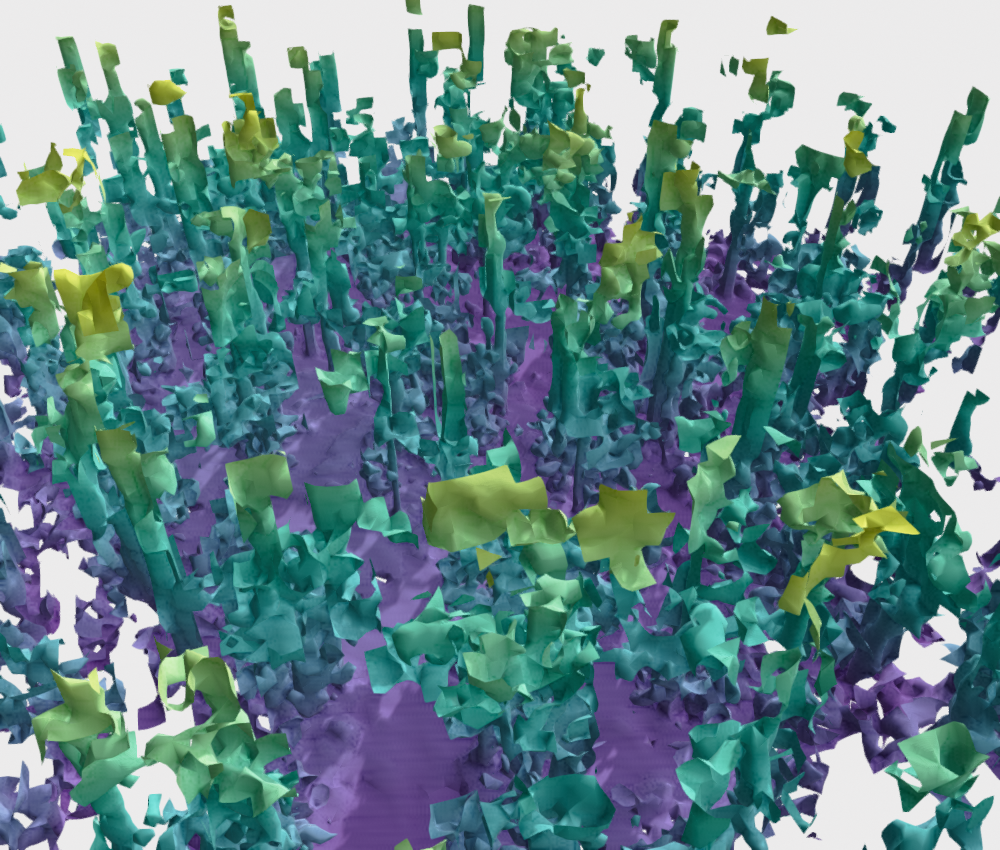}
        \caption{\footnotesize Forest}
    \end{subfigure}%
    \hfill%
    \begin{subfigure}[t]{0.24\linewidth}
        \includegraphics[width=\linewidth]{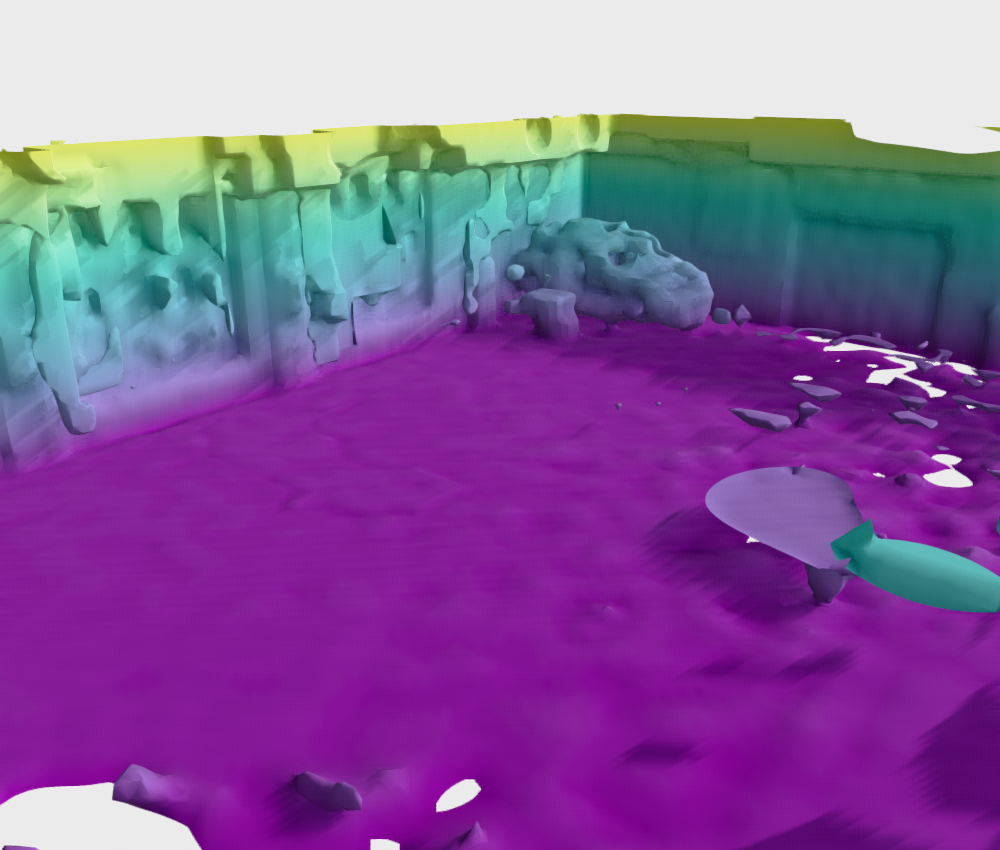}
        \caption{\footnotesize Garage}
    \end{subfigure}
    \begin{subfigure}[t]{0.24\linewidth}
        \includegraphics[width=\linewidth]{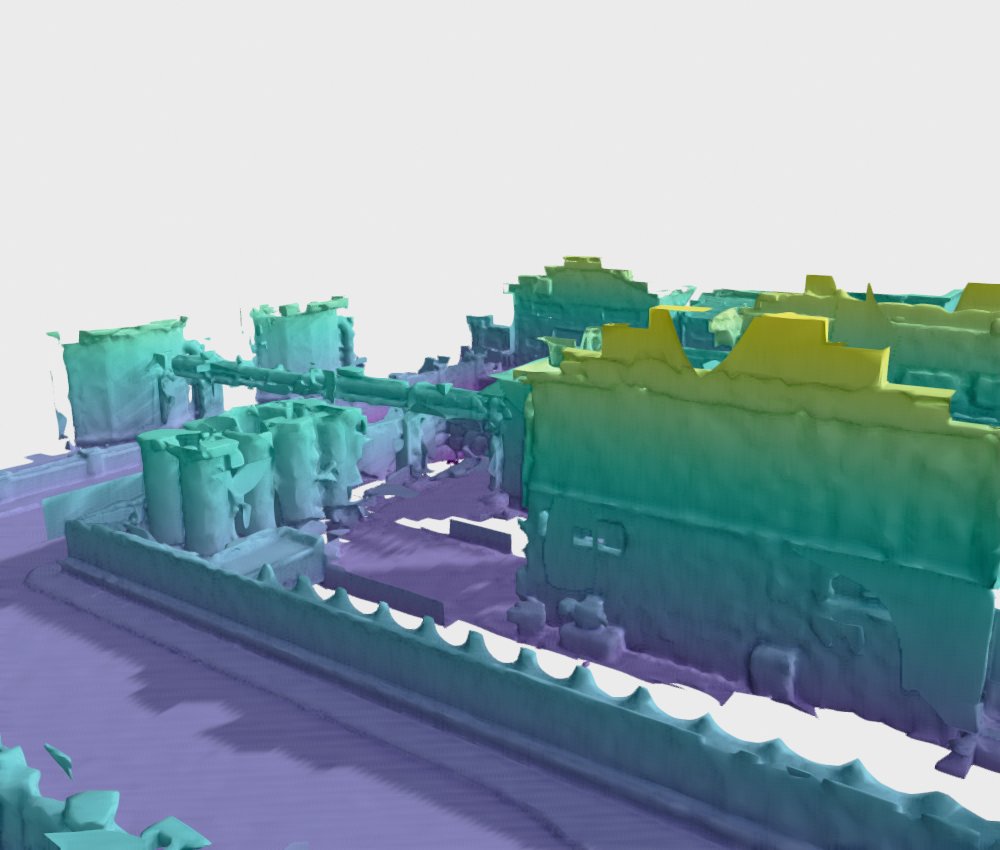}
        \caption{\footnotesize Industrial}
    \end{subfigure}%
    \hfill%
    \begin{subfigure}[t]{0.24\linewidth}
        \includegraphics[width=\linewidth]{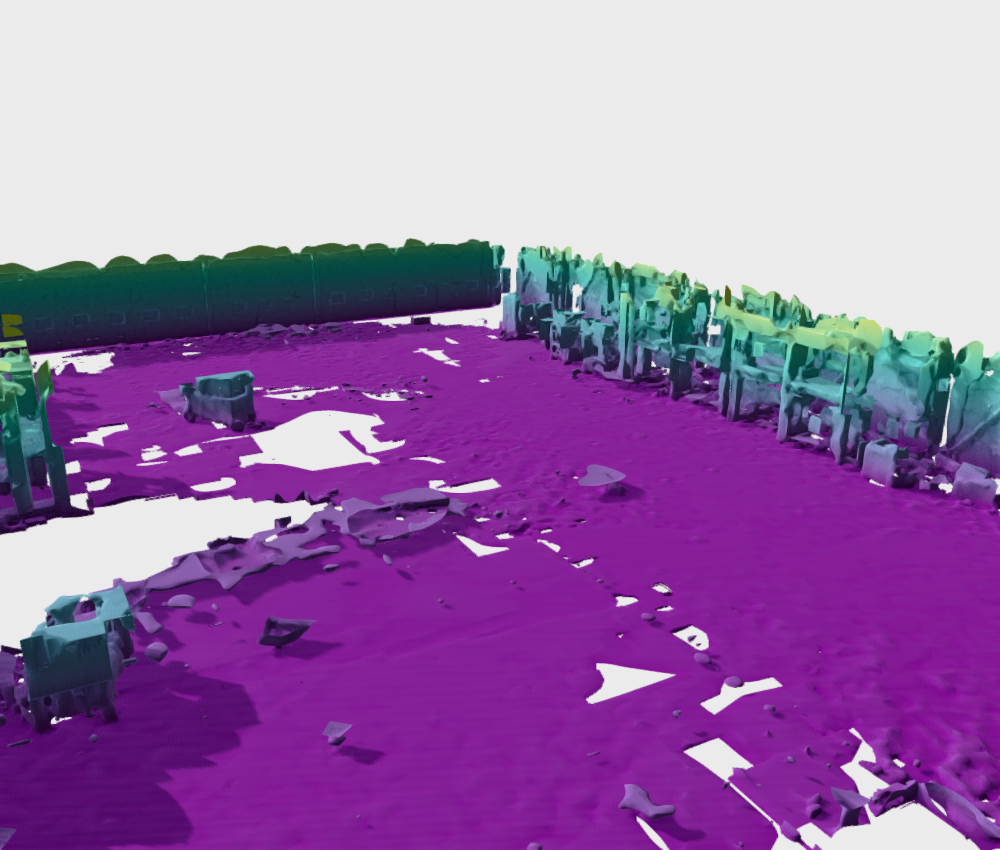}
        \caption{\footnotesize Warehouse}
    \end{subfigure}
\caption{\small Reconstructions by \methodname on four large outdoor scenes.}
\label{fig:outdoor_scenes}
\vspace{-3ex}
\end{figure}

Fig. \ref{fig:qualitative} shows mesh reconstruction results on the Replica room 0 scene. \htwomapping, PIN-SLAM, and \methodname generate complete and high-quality meshes. \htwomapping\ tends to produce smoother surfaces as it only allocates octree voxels near the surface.
The completeness of our reconstructions is better than \htwomapping. Compared with PIN-SLAM \cite{pan_pin-slam_2024}, which only predicts SDF values in regions close to the surface, our meshes exhibit smoother geometry with less noise. Compared to HIO-SDF \cite{hio-sdf_2024}, which also estimates continuous and differentiable SDFs, our reconstructions demonstrate substantially higher fidelity.
We also present the reconstructed meshes of the Newer College dataset by \methodname and PIN-SLAM in Fig. \ref{fig:comp_mesh_newer_college}, which shows that our method also outperforms the baselines on large scale real dataset.
We also show reconstructions on four diverse large-scale outdoor scenes in Fig. \ref{fig:outdoor_scenes}. \methodname runs in real-time on these scenes.

\subsubsection{SDF Reconstruction}
We visualize z-plane slices of the SDF predictions in Fig. \ref{fig:qualitative}. \htwomapping\ and PIN-SLAM estimate truncated SDF only. Although HIO-SDF can predict SDF values over the entire space, it struggles to precisely encode the surface as a zero-level set, which is crucial for robotic tasks where accurate perception of obstacle boundaries is required. In contrast, \methodname faithfully reconstructs the surface position and provides reliable SDF estimates in free space. The predictions of \methodname outside the scene boundaries are less accurate due to the lack of sensor observations. But this has little impact on applications where robots operate within the observed workspace.

\subsection{Quantitative Results}

We compute two sets of metrics: mesh metrics to evaluate the surface reconstruction quality and SDF metrics to evaluate the overall SDF predictions.

\subsubsection{Mesh Metrics}

We uniformly sample two point clouds, $\calP_\text{g.t.}$ and $\calP_\text{recon}$, of $200$k points each from the ground-truth mesh and from the reconstructed mesh, and report \emph{Chamfer-L1 distance}, \emph{F1 score}, \emph{precision}, \emph{recall}, \emph{completion}, \emph{completion ratio}, and \emph{accuracy}\cite{jiang_h2-mapping_2023,pan_pin-slam_2024}.

\begin{table*}[htbp]
    \centering
    \caption{\small Mesh reconstruction metrics on the Replica dataset \cite{replica19arxiv} and the Newer College dataset \cite{newercollege2021}.
    The best and second best results are bold and underlined, respectively.
    For the Replica dataset, $\delta=5$cm. For the Newer College dataset, $\delta=20$cm.
    }
    \label{table:mesh metrics}
    \scriptsize
    \begin{tabular}{l|l|ccccccccc}
    \hline
    Metric & Method & room 0 & room 1 & room 2 & office 0 & office 1 & office 2 & office 3 & office 4 & Newer College\\
    \hline
    \multirow{5}{*}{Completion [cm] $\downarrow$}
    & \methodname  &  \textbf{2.55} &  \textbf{1.71} &  \textbf{1.72} &  \textbf{1.52} &  \textbf{1.50} &  \textbf{2.41} &  \textbf{2.83} &  \textbf{2.76} & \textbf{10.66} \\
    & \htwomapping  &  \underline{3.05} & 2.51 & \underline{2.14} &  \underline{1.64} &  \underline{1.98} &  \underline{3.06} &  \underline{3.09} &  \underline{3.35} &21.94 \\
    & PIN-SLAM    &  3.46 &  2.90 &  2.89 &  1.96 &  2.58 &  3.54 &  3.34 &  3.77 & \underline{16.71}\\
    & HIO-SDF     &  4.50 &  3.63 &  3.51 &  3.26 &  3.87 &  4.09 &  4.50 &  4.17 & 72.86\\
    & Voxblox & 3.69 & \underline{2.16} & 2.31 & 1.71 & 2.07 & 3.52 & 4.37 & 4.38 & 21.30 \\
    \hline
    \multirow{5}{*}{Completion Ratio [$<\delta$]\% $\uparrow$}
    & \methodname  &  \textbf{93.32} &  \textbf{96.06} &  \textbf{95.66} &  \textbf{97.13} &  \textbf{96.51} &  \textbf{92.44} &  \textbf{89.52} &  \textbf{90.67} & \textbf{94.20} \\
    & \htwomapping  &  \underline{90.79} &  \underline{91.87} &  \underline{92.87} &  \underline{94.75} &  \underline{92.11} &  \underline{89.19} &  \underline{88.45} &  \underline{87.88} & 61.58 \\
    & PIN-SLAM    &  88.84 &  90.30 &  89.71 &  92.96 &  89.34 &  86.72 &  86.64 &  86.36 & \underline{72.83}\\
    & HIO-SDF     &  78.41 &  83.04 &  84.68 &  84.83 &  79.82 &  79.20 &  78.65 &  80.30 & 10.05\\
    & Voxblox  & 87.09 & 91.26 & 90.41 & 91.92 & 89.85 & 84.33 & 80.21 & 83.06 & 60.31 \\
    \hline
    \multirow{5}{*}{Recall [$<\delta$]\% $\uparrow$}
    & \methodname  &  \textbf{92.87} &  \textbf{95.84} &  \textbf{95.48} &  \textbf{96.81} &  \textbf{96.21} &  \textbf{92.26} &  \underline{88.49} &  \textbf{90.51} & \textbf{93.99} \\
    & \htwomapping  &  \underline{91.53} &  \underline{92.46} &  \underline{93.32} &  \underline{95.00} &  \underline{92.65} &  \underline{90.21} &  \textbf{89.56} &  \underline{89.14} & 57.96\\
    & PIN-SLAM    &  89.83 &  91.03 &  90.55 &  93.29 &  90.22 &  88.31 &  87.87 &  87.82 & \underline{70.40}\\
    & HIO-SDF     &  79.93 &  84.00 &  85.53 &  85.75 &  81.44 &  80.31 &  80.52 &  81.74 & 4.72\\
    & Voxblox  &  88.17 & 91.81 & 90.79 & 92.18 & 90.62 & 85.67 & 82.21 &  84.94 & 56.64 \\
    \hline
    \multirow{5}{*}{Precision [$<\delta$]\% $\uparrow$}
    & \methodname  &  87.04 &  90.77 &  91.70 &  86.90 &  88.69 &  90.14 &  80.59 &  89.00 & \textbf{90.69} \\
    & \htwomapping  &  \textbf{99.57} &  \textbf{99.78} &  \textbf{99.59} &  \textbf{99.65} &  \textbf{99.44} &  \textbf{99.66} &  \textbf{99.08} &  \textbf{99.52} & 52.97\\
    & PIN-SLAM      &  \underline{98.57} & \underline{98.39} &  \underline{98.59} &  \underline{97.95} &  \underline{98.40} &  \underline{98.06} &  \underline{96.83} &  \underline{98.39} & \underline{64.63}\\
    & HIO-SDF       &  85.86 &  89.04 &  90.54 &  91.24 &  88.55 &  84.83 &  88.24 &  88.15 & 4.46\\
    & Voxblox  & 96.18 & 97.94 & 94.56 & 95.31 & 98.13 & 93.69 &  91.44 & 95.55 & 51.84\\
    \hline
    \multirow{5}{*}{F1 Score [$<\delta$]\% $\uparrow$}
    & \methodname  &  89.86 &  93.24 &  93.55 &  91.59 &  92.29 &  91.19 &  84.36 &  89.75 & \textbf{92.31} \\
    & \htwomapping  &  \textbf{95.38} &  \textbf{95.98} &  \textbf{96.35} &  \textbf{97.27} &  \textbf{95.92} &  \textbf{94.70} &  \textbf{94.08} &  \textbf{94.05} & 55.35\\
    & PIN-SLAM    &  \underline{93.00} & 94.57 &  \underline{94.40} &  \underline{95.57} & 94.13 &  \underline{92.93} &  \underline{92.13} &  \underline{92.81} & \underline{67.39}\\
    & HIO-SDF     &  82.83 &  86.45 &  87.96 &  88.41 &  84.85 &  82.51 &  84.20 &  84.82 & 4.59\\
    & Voxblox  & 92.00 & \underline{94.77} & 92.64 & 93.72 & \underline{94.23} & 89.50 &  86.58 & 89.94 & 54.14 \\
    \hline
    \multirow{5}{*}{Chamfer-L1 Distance [cm] $\downarrow$}
    & \methodname  &  2.58 &  1.96 & \underline{1.85} &  2.07 & 1.85 &  \underline{2.35} &  3.26 &  \underline{2.60} & \textbf{9.36} \\
    & \htwomapping  &  \textbf{2.29} &  \underline{1.86} &  \textbf{1.73} &  1.83 & \underline{1.49} &  \textbf{2.24} &  \textbf{2.39} &  \textbf{2.46} & 28.4\\
    & PIN-SLAM      &  \underline{2.56} &  2.13 &  2.19 &  \underline{1.70} &  1.99 &  2.57 & \underline{2.60} &  2.73 & \underline{21.64}\\
    & HIO-SDF       &  3.79 &  3.24 &  3.14 &  2.93 &  3.33 &  3.79 &  3.90 &  3.60 & 422.29\\
    & Voxblox       &  2.72 & \textbf{1.65} & 1.90 & \textbf{1.47} & \textbf{1.44} & 2.60 & 3.26 & 3.02 & 23.37 \\
    \hline
    \multirow{5}{*}{Accuracy [cm] $\downarrow$}
    & \methodname  &  2.60 &  2.21 &  1.98 &  2.62 &  2.20 &  2.29 &  3.69 &  2.44 &  \textbf{8.07} \\
    & \htwomapping  &  \textbf{1.54} &  \underline{1.21} &  \textbf{1.33} &  \textbf{1.22} &  \underline{1.01} &  \textbf{1.43} &  \textbf{1.69} &  \textbf{1.58} & 34.86\\
    & PIN-SLAM    &  \underline{1.66} &  1.37 &  \underline{1.49} &  1.45 &  1.31 & \underline{1.61} & \underline{1.87} &  1.70 & 26.58\\
    & HIO-SDF     &  3.09 &  2.85 &  2.78 &  2.62 &  2.79 &  1.49 &  3.31 &  3.03 & 771.72\\
    & Voxblox  &  {1.74} &  \textbf{1.14} & 1.50 &  \underline{1.22} &  \textbf{0.81} & 1.68 & 2.16 & \underline{1.67} & \underline{25.44} \\
    \hline
    \end{tabular}
    \vspace{-1em}
\end{table*}

As shown in Table \ref{table:mesh metrics}, \methodname outperforms the baselines in recall, completion, and completion ratio. Our method is mostly the second best in the other mesh metrics. Since \htwomapping\ and PIN-SLAM are specially optimized for surface reconstruction, they perform slightly better in metrics like F1 score. However, their mesh has more holes, which can also improve metrics like precision. Voxblox performs mostly worse than other methods. Since HIO-SDF relies on a volumetric method like Voxfield~\cite{pan_voxfield_2022} to generate the SDF dataset, it also performs worse than \methodname.

\subsubsection{SDF Metrics}

We evaluate the SDF predictions of all methods using a regular grid of resolution $1.25$ cm ($20$ cm for Newer College).
For each scene, the ground-truth SDF $d$ of each grid center is computed from the ground-truth mesh (point cloud for Newer College).
In Table \ref{table:SDF metrics}, we report the \emph{mean absolute error} (MAE) of SDF, the \emph{angular MAE} of the SDF gradient, and the SDF \emph{valid ratio}, defined as the proportion of test positions where a method can predict SDF.
To examine the prediction quality in different regions, we categorize the grid points with $-0.1 \leq d \leq 0.2$ m as near surface, and the other points as far from the surface. \methodname marginally performs better than the baselines in all SDF metrics. HIO-SDF fails to train the network stably when its volumetric method does not provide good results. Since \htwomapping and PIN-SLAM are able to predict SDF only near the surface, their SDF valid ratios are extremely low, while HIO-SDF and \methodname both cover the whole scene.

\subsubsection{Runtime Metrics}

\begin{table*}[htbp]
    \centering
    \caption{\small SDF reconstruction metrics on the Replica dataset \cite{replica19arxiv} and the Newer College dataset \cite{newercollege2021}.}
    \label{table:SDF metrics}
    \scriptsize
    \begin{tabular}{ll|c|ccccccccc}
    \hline
    Metric & Region & Method & room 0 & room 1 & room 2 & office 0 & office 1 & office 2 & office 3 & office 4 & Newer College \\
    \hline
    \multirow{11}{*}{SDF MAE [cm]$\downarrow$}
    & \multirow{3}{*}{All}
     & \methodname   &  \textbf{2.15} & \textbf{2.13} & \textbf{2.26} &  \textbf{1.93} &  \textbf{2.16} &  \textbf{2.32} &  \textbf{2.65} &  \textbf{2.36} & \textbf{56.00} \\
    && HIO-SDF &  3.27 & 2.79 &  39.98 &  \underline{2.60} &  \underline{2.72} &  \underline{3.34} &  \underline{3.40} & 3.39 & 301.56 \\
    && Voxblox & \underline{3.13} & \underline{2.54} & \underline{2.73} & 3.02 &  2.73 & 3.47 & 3.74 & \underline{3.08} & \underline{62.92} \\
    \hhline{|~|-|-|-|-|-|-|-|-|-|-|-|}
    & \multirow{5}{*}{Near}
     & \methodname   &  \textbf{1.67} &  \textbf{1.26} &  \textbf{1.82} &  \textbf{1.30} &  \textbf{1.43} &  \textbf{1.84} &  \textbf{2.15} &  \textbf{2.11} & \underline{16.96} \\
    && HIO-SDF         & 3.45 & 2.90 &  28.32 &  2.57 & 3.00 & 3.36 & 3.52 & 3.62 & 58.34 \\
    && \htwomapping    &  6.13 &  6.01 &  5.54 &  5.88 &  5.75 &  5.81 &  5.99 &  6.19 & \textbf{9.60}\\
    && PIN-SLAM        &  4.65 &  8.10 &  8.06 &  8.06 &  8.15 &  8.11 &  8.13 &  8.07 & 22.64\\
    && Voxblox         & \underline{2.78} &  \underline{2.15} &  \underline{2.23} &  \underline{2.20} &  \underline{2.17} & \underline{2.85} & \underline{3.41} &  \underline{2.96} & 19.63 \\
    \hhline{|~|-|-|-|-|-|-|-|-|-|-|-|}
    & \multirow{3}{*}{Far}
     & \methodname & \textbf{2.39} &  \textbf{2.66} &  \textbf{2.49} &  \textbf{2.33} &  \underline{2.70} &  \textbf{2.58} &  \textbf{2.98} & \textbf{2.49} & \textbf{59.62} \\
    && HIO-SDF  &  \underline{3.12} &  \underline{2.69} &  50.22 &  \underline{2.62} &  \textbf{2.39} &  \underline{3.33} &  \underline{3.30} & 3.20 & 324.12 \\
    && Voxblox  & 3.37 & 2.85 & \underline{3.08} & 3.71 & 3.27 &  3.92 &  4.00 & \underline{3.16} & \underline{67.32} \\
    \hline
    \multirow{11}{*}{Grad. MAE [rad]$\downarrow$}
    & \multirow{3}{*}{All}
     & \methodname   &  \textbf{0.153} &  \textbf{0.157} &  \textbf{0.160} &  \textbf{0.160} &  \textbf{0.174} &  \textbf{0.162} &  \textbf{0.158} &  \textbf{0.154} & \textbf{0.184} \\
    && HIO-SDF  & 0.924 &  1.315 &  1.534 &  1.272 &  1.101 &  1.340 &  1.326 &  1.256 & 1.529\\
    && Voxblox  & \underline{0.230} & \underline{0.195} & \underline{0.222} & \underline{0.251} & \underline{0.190} & \underline{0.271} & \underline{0.279} & \underline{0.232} & \underline{0.777} \\
    \hhline{|~|-|-|-|-|-|-|-|-|-|-|-|}
    & \multirow{5}{*}{Near}
     & \methodname & \textbf{0.120} &  \textbf{0.116} &  \textbf{0.117} &  \textbf{0.128} &  \textbf{0.129} &  \textbf{0.117} &  \textbf{0.130} &  \textbf{0.117} & \textbf{0.156} \\
    && \htwomapping  &  1.076 &  1.095 &  1.043 &  1.095 &  1.081 &  1.085 &  1.081 &  1.096 & \underline{1.379}\\
    && PIN-SLAM      &  0.870 &  1.566 &  1.571 &  1.569 &  1.530 &  1.578 &  1.550 &  1.572 & 1.533\\
    && HIO-SDF       &  0.975 &  1.301 &  1.560 &  1.281 &  1.115 &  1.324 &  1.331 &  1.257 & 1.565\\
    && Voxblox       & \underline{0.282} & \underline{0.183} & \underline{0.204} & \underline{0.259} & \underline{0.179} & \underline{0.301} & \underline{0.336} & \underline{0.299} & 1.435 \\
    \hhline{|~|-|-|-|-|-|-|-|-|-|-|-|}
    & \multirow{3}{*}{Far}
     & \methodname &  \textbf{0.165} &  \textbf{0.177} &  \textbf{0.176} &  \textbf{0.175} &  \underline{0.199} &  \textbf{0.180} &  \textbf{0.171} &  \textbf{0.167} & \textbf{0.198} \\
    && HIO-SDF  &  0.884 & 1.328 &  1.512 & 1.262 & 1.084 & 1.356 & 1.325 &  1.255 & 1.526 \\
    && Voxblox  & \underline{0.208} & \underline{0.201} & \underline{0.230} & \underline{0.246} & \textbf{0.198} & \underline{0.257} & \underline{0.256} & \underline{0.205} & \underline{0.720} \\
    \hline
    \multicolumn{2}{l|}
    {\multirow{2}{*}{SDF Valid Ratio [\%]$\uparrow$}}
    & \htwomapping  &  \underline{16.05} &  \textbf{16.62} &  \textbf{18.15} &  \textbf{17.46} &  \textbf{20.30} &  \textbf{16.84} &  \textbf{17.05} &  \textbf{15.43} & \underline{28.22}\\
    && PIN-SLAM    &  \textbf{25.01} &  \underline{6.47} &  \underline{5.82} &  \underline{7.86} &  \underline{8.63} &  \underline{7.09} &  \underline{7.19} &  \underline{5.67} & \textbf{47.19} \\
    \hline
    \end{tabular}
    \vspace{-3ex}
    \end{table*}
\def\ourfps{13.96\xspace}
\def\ournewerfps{14.28\xspace}
\begin{table}[t]
\centering
\caption{\small Runtime and peak memory on Replica room 0 \cite{replica19arxiv} and Newer College \cite{newercollege2021}. \htwomapping \cite{jiang_h2-mapping_2023} is tested on Newer College by converting each LiDAR scan into 4 depth images since its implementation lacks LiDAR support.}
\label{table:comp_runtime}
\footnotesize
\resizebox{\linewidth}{!}{%
\begin{tabular}{ll|ccccc}
\hline
& & \methodname & \htwomapping & PIN-SLAM & HIO-SDF & Voxblox \\
\hline
\multirow{3}{*}{\rotatebox{90}{\shortstack{Replica\\room 0}}}
& FPS $\uparrow$        & \textbf{\ourfps} & \underline{12.36} & 8.43 & 1.99 & 0.87 \\
& CPU [GB] $\downarrow$ & 3.81 & 2.46 & \underline{2.23} & 10.33 & \textbf{1.19} \\
& GPU [GB] $\downarrow$ & \textbf{1.33} & 13.96 & 6.12 & 2.14 & N/A \\
\hline
\multirow{3}{*}{\rotatebox{90}{\shortstack{Newer\\College}}}
& FPS $\uparrow$        & \textbf{\ournewerfps} & \underline{$9.03^*$} & 8.95 & 3.80 & 0.20 \\
& CPU [GB] $\downarrow$ & 3.10 & \underline{2.37} & 5.98 & 7.08 & \textbf{2.04} \\
& GPU [GB] $\downarrow$ & \textbf{1.59} & 9.90 & 7.56 & \underline{2.63} & N/A \\
\hline
\end{tabular}}
\vspace{-1em}
\end{table}

We measure the timing and peak memory of each method on Intel 14900K CPU and NVIDIA RTX 3090 GPU. As shown in Table \ref{table:comp_runtime}, \methodname is the fastest on both the small-scale Replica room 0 and the large-scale Newer College sequence, running at \ourfps and \ournewerfps fps, respectively. Voxblox is known for its fast TSDF estimation but its ESDF estimation is significantly slower due to the integration. \methodname uses the least GPU memory among all methods (1.33 GB on Replica and 1.59 GB on Newer College). Its CPU memory is higher than the lightest baselines because the semi-sparse octree allocates additional vertices but stays far below HIO-SDF and modest even on the large Newer College scene. This efficiency is also confirmed by deployment on an NVIDIA Jetson Orin with 16 GB RAM, where \methodname runs at 7 fps.

\subsection{Ablation Study}

To assess each component, we train four variants: without the neural residual, with a regular sparse octree, without gradient augmentation, and without the projection loss.
As shown in Table \ref{table:ablation_study}, the neural residual improves both mesh reconstruction quality and SDF prediction accuracy. With a sparse octree, our method performs worse due to discontinuities. Without gradient augmentation, the SDF priors become worse, leading to worse performance. Removing the projection loss significantly worsens metrics, as it provides important guidance on the SDF scale and gradient direction.
Finally, the numbers of semi-sparse layers trades off memory for accuracy. On Replica room 0, making all layers semi-sparse nearly doubles the allocated octants from $21418$ to $41073$ ($+92\%$) and the vertices from $37130$ to $51174$ ($+38\%$), without significant accuracy gain, confirming that only the coarser layers need to be semi-sparse.

\begin{table}[t]
\centering
\caption{\small Ablation study results on Replica room 1 \cite{replica19arxiv}.}
\label{table:ablation_study}
\scriptsize
\begin{tabular}{ll|ccccc}
\hline
\multirow{2}{*}{Metric} & & \multirow{2}{*}{\methodname} & Prior & Sparse & w/o Grad.  & w/o Proj. \\
                        & &                                & Only  & Octree & Aug.       & Loss \\
\hline
\multicolumn{2}{l|}{F1 Score \%$\uparrow$}& \textbf{93.33} & \underline{93.07} & 93.01 & 92.09 & 89.22 \\
\multicolumn{2}{l|}{Precision \%$\uparrow$}& \textbf{90.98} & \underline{90.48} & 90.45 & 88.61 & 84.04 \\
\multicolumn{2}{l|}{Recall \%$\uparrow$}& 95.79 & \underline{95.82} & 95.71 & \textbf{95.86} & 95.07 \\
\multicolumn{2}{l|}{Completion Ratio \%$\uparrow$}& 96.00 & \underline{96.05} & 95.95 & \textbf{96.17} & 95.65 \\
\multicolumn{2}{l|}{Completion [cm]$\downarrow$}& \textbf{1.705} & 1.709 & 1.717 & \underline{1.709} & 1.775 \\
\multicolumn{2}{l|}{Accuracy [cm]$\downarrow$} & \textbf{2.201} & 2.234 & \underline{2.228} & 2.441 & 3.045 \\
\multicolumn{2}{l|}{Chamfer Distance $\downarrow$}& \textbf{1.953} & \underline{1.971} & 1.972 & 2.075 & 2.410 \\
\hline
{SDF} & All & \textbf{1.862} & 2.185  & \underline{2.090} & 2.208 & 18.688 \\
{MAE} & Near & \textbf{1.203} & 1.320 & \underline{1.278} & 1.612 & 2.303 \\
{[cm]$\downarrow$}& Far & \textbf{2.468} & 2.981 & 2.837 & \underline{2.757} & 33.756 \\
\hline
Grad. & All & \textbf{0.1527} & \underline{0.1529} & 0.1626 & 0.1661 & 0.1713 \\
MAE   & Near & \underline{0.1166} & \textbf{0.1107} & 0.1229 & 0.1298 & 0.1328 \\
{[rad]$\downarrow$}& Far & \textbf{0.1770} & \underline{0.1823} & 0.1895 & 0.1890 & 0.2519 \\
\hline
\end{tabular}
\vspace{-4mm}
\end{table}

\section{Conclusion}
\label{sec:conclusion}

This paper developed \methodname, an online hybrid method for globally accurate SDF estimation from streaming point cloud data at large scales. Our method combines an explicit octree prior with an implicit neural residual. Our evaluation shows that this combination achieves more accurate and efficient SDF reconstruction than state-of-the-art methods. Future work will focus on utilizing \methodname in robot localization, navigation, and manipulation.

\balance
{\small
\bibliographystyle{IEEEtran}
\bibliography{bib/main.bib}

@IEEEtranBSTCTL{IEEEexample:BSTcontrol,
CTLuse_forced_etal       = "yes",
CTLmax_names_forced_etal = "3",
CTLnames_show_etal       = "2" }

@String(TRO  = {IEEE Transactions on Robotics})

@String(IJRR = {Int. J. Robot. Res.})

@String(RAL  = {IEEE Robotics and Automation Letters})

@String(CVPR= {IEEE Conf. Comput. Vis. Pattern Recog.})

@String(ICCV= {Int. Conf. Comput. Vis.})

@String(ECCV= {Eur. Conf. Comput. Vis.})

@String(NIPS= {Adv. Neural Inform. Process. Syst.})

@String(TVCG  = {IEEE Trans. Vis. Comput. Graph.})

@String(RSS  = {Robotics: Science and Systems})

@String(ICRA = {IEEE Int. Conf. Robot. Autom.})

@String(IROS = {IEEE/RSJ Int. Conf. Intell. Robot. Syst.})

@String(ICML = {International Conference on Machine Learning})

@String(TVCG = {IEEE Transactions on Visualization and Computer Graphics})

@String(ISMAR = {IEEE Int. Symp. Mixed and Augmented Reality})

@String(TRO   = {IEEE T-RO})

@String(IJRR  = {IJRR})

@String(RAL   = {IEEE RA-L})

@String(CVPR  = {CVPR})

@String(ICCV  = {ICCV})

@String(ECCV  = {ECCV})

@String(NIPS  = {NeurIPS})

@String(TVCG  = {IEEE TVCG})

@String(RSS   = {RSS})

@String(ICRA  = {ICRA})

@String(IROS  = {IROS})

@String(ICML  = {ICML})

@String(TVCG = {IEEE TVCG})

@String(ISMAR = {ISMAR})

@article{replica19arxiv,
  title =   {{The {R}eplica Dataset: A Digital Replica of Indoor Spaces}},
  author =  {Julian Straub and Thomas Whelan and Lingni Ma and Yufan Chen and Erik Wijmans and Simon Green and Jakob J. Engel and Raul Mur-Artal and Carl Ren and Shobhit Verma and Anton Clarkson and Mingfei Yan and Brian Budge and Yajie Yan and Xiaqing Pan and June Yon and Yuyang Zou and Kimberly Leon and Nigel Carter and Jesus Briales and  Tyler Gillingham and  Elias Mueggler and Luis Pesqueira and Manolis Savva and Dhruv Batra and Hauke M. Strasdat and Renzo De Nardi and Michael Goesele and Steven Lovegrove and Richard Newcombe },
  journal = {preprint arXiv:1906.05797},
  year =    {2019}
}

@inproceedings{nerf2020,
  title     = {{NeRF}: {R}epresenting {S}cenes as {N}eural {R}adiance {F}ields for {V}iew {S}ynthesis},
  author    = {Ben Mildenhall and Pratul P. Srinivasan and Matthew Tancik and Jonathan T. Barron and Ravi Ramamoorthi and Ren Ng},
  year      = {2020},
  booktitle = {ECCV},
}

@inproceedings{curless_volumetric_1996,
  title     = {{A Volumetric Method for Building Complex Models from Range Images}},
  booktitle = {SIGGRAPH},
  author    = {Curless, Brian and Levoy, Marc},
  year      = {1996}
}

@inproceedings{newcombe_kinectfusion_2011,
  title     = {{{KinectFusion}: {Real}-time Dense Surface Mapping and Tracking}},
  booktitle = ISMAR,
  author    = {Newcombe, Richard A. and Izadi, Shahram and Hilliges, Otmar and Molyneaux, David and Kim, David and Davison, Andrew J. and Kohi, Pushmeet and Shotton, Jamie and Hodges, Steve and Fitzgibbon, Andrew},
  year      = {2011}
}

@article{kahler_very_2015,
  title   = {Very {High} {Frame} {Rate} {Volumetric} {Integration} of {Depth} {Images} on {Mobile} {Devices}},
  journal = TVCG,
  author  = {Kähler, Olaf and Adrian Prisacariu, Victor and Yuheng Ren, Carl and Sun, Xin and Torr, Philip and Murray, David},
  year    = {2015}
}

@inproceedings{oleynikova_voxblox_2017,
  title     = {{Voxblox: Incremental {3D} {E}uclidean Signed Distance Fields for On-board {MAV} planning}},
  booktitle = IROS,
  author    = {Oleynikova, Helen and Taylor, Zachary and Fehr, Marius and Siegwart, Roland and Nieto, Juan},
  year      = {2017}
}

@inproceedings{han_fiesta_2019,
  title     = {{FIESTA}: {Fast} {Incremental} {Euclidean} {Distance} {Fields} for {Online} {Motion} {Planning} of {Aerial} {Robots}},
  booktitle = IROS,
  author    = {Han, Luxin and Gao, Fei and Zhou, Boyu and Shen, Shaojie},
  year      = {2019}
}

@inproceedings{pan_voxfield_2022,
  title     = {Voxfield: {Non}-{Projective} {Signed} {Distance} {Fields} for {Online} {Planning} and {3D} {Reconstruction}},
  booktitle = IROS,
  author    = {Pan, Yue and Kompis, Yves and Bartolomei, Luca and Mascaro, Ruben and Stachniss, Cyrill and Chli, Margarita},
  year      = {2022}
}

@inproceedings{millane_nvblox_2024,
  title     = {nvblox: {GPU}-{Accelerated} {Incremental} {Signed} {Distance} {Field} {Mapping}},
  booktitle = ICRA,
  author    = {Millane, Alexander and Oleynikova, Helen and Wirbel, Emilie and Steiner, Remo and Ramasamy, Vikram and Tingdahl, David and Siegwart, Roland},
  year      = {2024}
}

@article{lan2021loggpis,
  author  = {Wu, Lan and Lee, Ki Myung Brian and Liu, Liyang and Vidal-Calleja, Teresa},
  journal = RAL,
  title   = {{Faithful Euclidean Distance Field From Log-Gaussian Process Implicit Surfaces}},
  year    = {2021}
}

@inproceedings{lee2019gpis,
  author    = {Lee, Bhoram and Zhang, Clark and Huang, Zonghao and Lee, Daniel D.},
  booktitle = ICRA,
  title     = {{Online Continuous Mapping using Gaussian Process Implicit Surfaces}},
  year      = {2019}
}

@inproceedings{park_deepsdf_2019,
  title     = {{DeepSDF}: {Learning} {Continuous} {Signed} {Distance} {Functions} for {Shape} {Representation}},
  booktitle = CVPR,
  author    = {Park, Jeong Joon and Florence, Peter and Straub, Julian and Newcombe, Richard and Lovegrove, Steven},
  year      = {2019}
}

@inproceedings{gropp_implicit_2020,
	title = {{Implicit Geometric Regularization for Learning Shapes}},
	booktitle = ICML,
	author = {Gropp, Amos and Yariv, Lior and Haim, Niv and Atzmon, Matan and Lipman, Yaron},
	year = {2020},
}

@inproceedings{wang_neus_2021,
  title     = {{{NeuS}: Learning Neural Implicit Surfaces by Volume Rendering for Multi-view Reconstruction}},
  booktitle = NIPS,
  author    = {Wang, Peng and Liu, Lingjie and Liu, Yuan and Theobalt, Christian and Komura, Taku and Wang, Wenping},
  year      = {2021}
}

@inproceedings{takikawa_lod_2021,
  title     = {Neural {Geometric} {Level} of {Detail}: {Real}-time {Rendering} with {Implicit} {3D} {Shapes}},
  booktitle = CVPR,
  author    = {Takikawa, Towaki and Litalien, Joey and Yin, Kangxue and Kreis, Karsten and Loop, Charles and Nowrouzezahrai, Derek and Jacobson, Alec and McGuire, Morgan and Fidler, Sanja},
  year      = {2021}
}

@inproceedings{ortiz_isdf_2022,
  title     = {{{iSDF}: Real-Time Neural Signed Distance Fields for Robot Perception}},
  booktitle = RSS,
  author    = {Ortiz, Joseph and Clegg, Alexander and Dong, Jing and Sucar, Edgar and Novotny, David and Zollhoefer, Michael and Mukadam, Mustafa},
  year      = {2022}
}

@inproceedings{wang_neus2_2023,
  title     = {{{NeuS2}: Fast Learning of Neural Implicit Surfaces for Multi-view Reconstruction}},
  booktitle = {ICCV},
  author    = {Wang, Yiming and Han, Qin and Habermann, Marc and Daniilidis, Kostas and Theobalt, Christian and Liu, Lingjie},
  year      = {2023}
}

@article{jiang_h2-mapping_2023,
  title   = {H2-{Mapping}: {Real}-time {Dense} {Mapping} {Using} {Hierarchical} {Hybrid} {Representation}},
  journal = RAL,
  author  = {Jiang, Chenxing and Zhang, Hanwen and Liu, Peize and Yu, Zehuan and Cheng, Hui and Zhou, Boyu and Shen, Shaojie},
  year    = {2023}
}

@inproceedings{wang_hotspot_2024,
    title={{{HotSpot}: Signed Distance Function Optimization with an Asymptotically Sufficient Condition}},
    author={Zimo Wang and Cheng Wang and Taiki Yoshino and Sirui Tao and Ziyang Fu and Tzu-Mao Li},
    year={2025},
    booktitle=CVPR
}

@article{pan_pin-slam_2024,
  author={Pan, Yue and Zhong, Xingguang and Wiesmann, Louis and Posewsky, Thorbjörn and Behley, Jens and Stachniss, Cyrill},
  journal = TRO,
  title   = {{{PIN}-{SLAM}: {LiDAR} {SLAM} Using a Point-Based Implicit Neural Representation for Achieving Global Map Consistency}},
  year={2024},
}

@INPROCEEDINGS{lcp-fusion-2024,
  author={Wang, Jiahui and Deng, Yinan and Yang, Yi and Yue, Yufeng},
  booktitle=IROS,
  title={{LCP-Fusion: A Neural Implicit SLAM with Enhanced Local Constraints and Computable Prior}},
  year={2024},
}

@inproceedings{hio-sdf_2024,
  title      = {{HIO}-{SDF}: {Hierarchical} {Incremental} {Online} {Signed} {Distance} {Fields}},
  booktitle  = ICRA,
  author     = {Vasilopoulos, Vasileios and Garg, Suveer and Huh, Jinwook and Lee, Bhoram and Isler, Volkan},
  year       = {2024}
}

@article{wu_vdb-gpdf_2025,
  title   = {{{VDB}-{GPDF}: Online {G}aussian Process Distance Field with {VDB} Structure}},
  journal = RAL,
  author  = {Wu, Lan and Le Gentil, Cedric and Vidal-Calleja, Teresa},
  year    = {2025}
}

@misc{li2024config,
    title={{Configuration Space Distance Fields for Manipulation Planning}},
    author={Yiming Li and Xuemin Chi and Amirreza Razmjoo and Sylvain Calinon},
    year={2024},
    booktitle = RSS
}

@article{long_sensor-based_2025,
    title   = {{Sensor-based Distributionally Robust Control for Safe Robot Navigation in Dynamic Environments}},
    journal = IJRR,
    author  = {Long, Kehan and Yi, Yinzhuang and Dai, Zhirui and Herbert, Sylvia and Cortés, Jorge and Atanasov, Nikolay},
    year    = {2025}
}

@InProceedings{Chou_2023_ICCV,
    author    = {Chou, Gene and Bahat, Yuval and Heide, Felix},
    title     = {{Diffusion-{SDF}: Conditional Generative Modeling of Signed Distance Functions}},
    booktitle = {ICCV},
    year      = {2023}
}

@inproceedings{ReDSDF,
  author={Liu, Puze and Zhang, Kuo and Tateo, Davide and Jauhri, Snehal and Peters, Jan and Chalvatzaki, Georgia},
  booktitle=IROS,
  title={{Regularized Deep Signed Distance Fields for Reactive Motion Generation}},
  year={2022},
}

@inproceedings{sommersang2022,
  title     = {{Gradient-SDF: A Semi-Implicit Surface Representation for 3D Reconstruction}},
  author    = {Sommer, Christiane and Sang, Lu and Schubert, David and Cremers, Daniel},
  booktitle = CVPR,
  year      = {2022}
}

@article{newercollege2021,
  title   = {{Multi-Camera LiDAR Inertial Extension to the Newer College Dataset}},
  author  = {Lintong Zhang and Marco Camurri and David Wisth and Maurice Fallon},
  journal = {preprint arXiv:2112.08854},
  year    = {2021},
}
}

\end{document}